\documentclass{article} 
\usepackage{iclr2026_conference,times}

\usepackage{amsmath,amsfonts,bm}

\def\eqref#1{equation~\ref{#1}}

\def\1{\bm{1}}

\DeclareMathAlphabet{\mathsfit}{\encodingdefault}{\sfdefault}{m}{sl}
\SetMathAlphabet{\mathsfit}{bold}{\encodingdefault}{\sfdefault}{bx}{n}

\usepackage{natbib}
\usepackage{hyperref}
\usepackage{url}

\usepackage{amsmath}
\usepackage{amssymb}
\usepackage{mathtools}
\usepackage{amsthm}

\theoremstyle{plain}
\newtheorem{theorem}{Theorem}[section]

\newtheorem{lemma}[theorem]{Lemma}

\theoremstyle{definition}
\newtheorem{definition}[theorem]{Definition}

\theoremstyle{remark}

\title{A Generalization Theory for JEPA-Based World Models}
\iclrfinalcopy

\author{
  {\bf Jingyi Cui}$^{1}$\thanks{Equal contribution} \qquad
  {\bf Qi Zhang}$^{1}$\footnotemark[1] \qquad
  {\bf Hongwei Wen}$^{2}$ \qquad
  {\bf Yisen Wang}$^{1}$\thanks{Corresponding author: Yisen Wang (yisen.wang@pku.edu.cn)} \\
  $^1$State Key Lab of General AI, 
  School of Intelligence Science and Technology, Peking University \\
  $^2$University of Sydney
}

\begin{document}

\maketitle

\begin{abstract}
	Joint Embedding Predictive Architectures (JEPAs) have recently emerged as a promising paradigm for world modeling by learning predictive dynamics in a latent space rather than generating future observations at the input level. Despite their empirical success, the theoretical understanding of JEPA-based world models remains limited. 
	In this paper, we develop the first generalization theory for JEPA-based world models. We formulate JEPA pretraining as a conditional spectral graph learning problem and show that the JEPA objective is equivalent to a low-rank factorization of an action-conditioned co-occurrence matrix. Building on this characterization, we establish a connection between JEPA pretraining error and downstream planning regret, leading to a finite-sample generalization bound for JEPA-based world models. Our analysis reveals an inherent trade-off between approximation and sample errors with respect to the latent dimension, providing theoretical insights into the advantages and limitations of latent predictive models compared with input-level predictive approaches.
\end{abstract}

\section{Introduction}

Recent progress in self-supervised learning has increasingly shifted the focus of representation learning from discriminative objectives toward predictive world modeling. Among the emerging paradigms, Joint Embedding Predictive Architectures (JEPAs) \citep{lecun2022path} excel by predicting latent-level information while avoiding the inefficiency of input-level generation. This perspective has motivated a rapidly growing line of research on predictive latent world models, including V-JEPA \citep{assran2025v}, VL-JEPA \citep{chen2025vl}, LeWM \citep{maes2026leworldmodel}, etc.

Despite the growing empirical success of JEPA world models, their theoretical understanding remains limited. 
Existing theoretical studies on JEPAs have primarily focused more on understanding JEPAs as a framework for latent space learning. \citet{littwin2024jepa} analyzed that JEPAs preferentially learn high-influence semantic features while suppressing noisy or weakly informative signals. \citet{balestriero2025gaussian} reveals that JEPA anti-collapse regularization implicitly performs density estimation of the input observations. Very recently, \citet{klindt2026does} proved the identifiability of the Gaussian regularized JEPA  world model.
Nonetheless, these works either studied JEPA in the static representation learning setting or based the theoretical analysis on a parametric modeling. To the best of our knowledge, there is currently no theoretical framework explaining how JEPAs generalize as a world model framework in real-world action planning.
As the action planning is conducted in the latent space whereas the planned actions are to be evaluated in the input-level downstream tasks, provable guarantees on downstream generalization is of vital importance.

In this paper, we establish the first generalization theory for JEPA-based world models. Based on a conditioned spectral graph formulation, we established the equivalence between the JEPA pretraining risk and matrix factorization of the co-occurrence matrix. Then by establishing the relationship between the downstream action planning regret and the pretrained JEPA risk, we derive the generalization error bound for JEPA-based world models. The inherent trade-off between the approximation and sample error shown in this bound enables us to theoretically compare between the latent- and input-level predictive models.

Our contributions are as follows.
\begin{itemize}
    \item We for the first time establish a spectral graph based theoretical framework for JEPA-based world model, where we propose a conditioned co-occurrence matrix formulating the co-occurrence probability of the current and next state conditioned on the action.
    \item We show that the JEPA risk is equivalent to a matrix factorization of the co-occurrence matrix conditioned on the actions, based on which we derive a generalization error bound for the JEPA-based world models. 
    \item Our theoretical results demonstrate an inherent trade-off between approximation and sample error with respect to latent dimension, which can be used to demonstrate the advantages of latent- and input-level predictive models respectively.
\end{itemize}

\section{Related Works}

\textbf{JEPA and JEPA-based world models.} Joint-Embedding Predictive Architectures (JEPAs) provide a non-generative paradigm for self-supervised representation learning by predicting target representations in latent space rather than reconstructing raw observations. I-JEPA \citep{ijepa} first applies this idea to images, while V-JEPA \citep{bardes2023v} extends it to videos by predicting masked spatio-temporal features without pixel-level reconstruction. Recent works further use this paradigm for world modeling: V-JEPA 2 \citep{assran2025v} trains an action-conditioned latent world model for robot planning, DINO-WM \citep{zhou2024dino} predicts future DINOv2 features for zero-shot planning, and LeWorldModel \citep{maes2026leworldmodel} learns an end-to-end JEPA-style world model directly from pixels. These methods show that latent feature prediction is an efficient alternative to pixel-level dynamics modeling.

Despite the empirical success, theoretical guarantees of the JEPA-based world models are largely underexplored. The only existing research are conducted based on Gaussian-regularized JEPAs, where \citet{balestriero2025gaussian} proved that the Gaussian regularization ensures input density estimation, and \citet{klindt2026does} provided identifiability results. 
Nonetheless, the generalization guarantees of JEPA-based world models are still lacking.

\textbf{Spectral graph theory for representation learning.}
Spectral graph theory was introduced to self-supervised representation learning by \citet{haochen2021provable}, who build generalization guarantees for self-supervised contrastive learning by formulating the similarity of the augmented data through the concept of augmentation graph.
The theoretical framework was later extended to other representation learning paradigms, including unsupervised domain adaptation \citep{shen2022connect}, multi-modal learning \citep{zhang2023generalization}, weakly supervised learning \citep{cui2023rethinking}, autoregressive
and masked self-supervised learning (SSL) \citep{zhang2024look}, and SSL with difficult examples \citep{zhangdifficult}.
Recently, \citet{balestriero2026spectral} further brought forward a spectral graph theory of SSL that relies on harmonic analysis and spectral graph theory. 
Despite the theoretical research on representation learning, spectral graph theory has not yet been explored in JEPA-based world models which make predictions in the representation space.

\section{Mathematical Formulations}

\subsection{Pretraining}

\textbf{JEPA pretraining objective.}
In the pretraining stage of JEPA, world models are learned by forecasting future latent representations instead of reconstructing raw observations. Given a current observation $x \in \mathcal{X}:=\mathbb{R}^d$ and action $a \in \mathcal{A}$, an encoder $f: \mathbb{R}^d \to \mathbb{R}^k$ maps the observation into a latent state $z = f(x)$, while a predictor $g: \mathbb{R}^k \times \mathcal{A} \to \mathbb{R}^k$ estimates the latent representation of a future observation $x^+ \in \mathcal{X}$. The training objective minimizes the discrepancy between the predicted latent representation $\hat{z}^+ = g(f(x),a)$ and the target latent embedding $z^+ = f(x^+)$, i.e.,
\begin{align}\label{eq::loss_jepa}
	\mathcal{L}_{\mathrm{JEPA}} (x,x^+;f,g,a) 
	= \|g(f(x),a) - f(x^+)\|^2.
\end{align}
Note that the pretraining loss in \eqref{eq::loss_jepa} alone would leads to representation collapse. To prevent this, the JEPA-based models either use uniformity regularizations \citep{balestriero2025lejepa, maes2026leworldmodel} or techniques such as stop gradient and exponential moving average \citep{assran2025v}.

In this paper, for the ease of theoretical analysis, we adopt a uniformity term inspired by spectral contrastive learning \citep{haochen2021provable}, and define the pretraining loss as 
\begin{align}
    \mathcal{L}_{\mathrm{JEPA}} 
    = \|g(f(x),a) - f(x^+)\|^2
	+ \left[g(f(x),a)^\top f(x')\right]^2.
\end{align}

For population risk, we assume the $(x,x^+)$ pairs are generated as follows: 1) sample $x \sim \mathrm{P}_X$ from $\mathbb{R}^d$, and 2) given $x$ and $a \in \mathcal{A}$, sample $x^+ \sim \mathrm{P}(\cdot|x,a)$. Then given action $a$, we define the pretraining risk as 
\begin{align}\label{eq::def_jeparisk}
	\mathcal{R}_{\mathrm{JEPA}} (f,g,a) 
	&= \mathbb{E}_{x \sim \mathrm{P}_X} \mathbb{E}_{x^+ \sim \mathrm{P}(\cdot|x,a)} \|g(f(x),a) - f(x^+)\|^2 
	+ \mathbb{E}_{\overset{x \sim \mathrm{P}_X}{x' \sim \mathrm{P}_{X|a}}} \left[g(f(x),a)^\top f(x')\right]^2.
\end{align}

Then given action $a \in \mathcal{A}$, we denote the optimal encoder and predictor under the population pretraining risk as 
\begin{align}
    (f^*, g^*) = \arg\min_{f,g} \mathcal{R}_{\mathrm{JEPA}} (f,g,a).
\end{align}

\textbf{Conditional co-occurrence matrix and matrix factorization.}
Inspired by previous works which introduced spectral graph theory to representation learning \citep{haochen2021provable, zhang2024look}, we here introduce a conditional co-occurrence matrix that formulates the co-occurrence relationship between $(x,x^+)$ pairs conditioned on the actions.

\begin{figure}[!t]
    \centering
    \includegraphics[width=\linewidth]{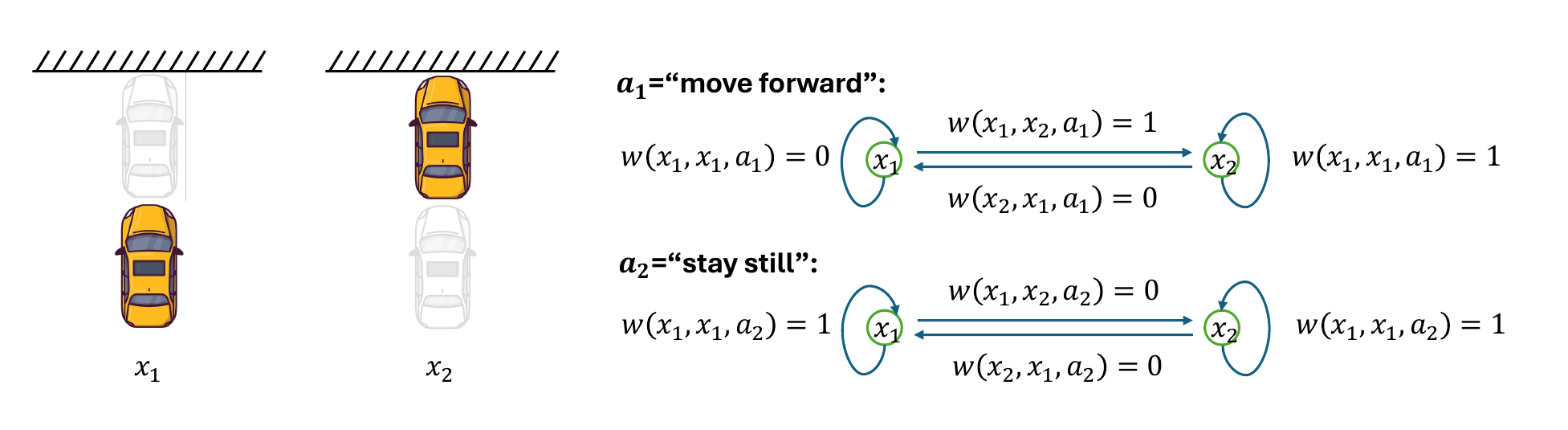}
    \caption{An illustrative example of the graph relationship between world model inputs. (Left) Observations of a car-moving example, where $x_1$ and $x_2$ represent two possible states of car locations. (Right) The transition probabilities from $x_i$ to $x_j$, $i,j \in \{1,2\}$ are conditioned on the actions $a_1$ and $a_2$. The conditioned co-occurrence matrix $M(a)$ is formed by assembling $w(x_i,x_j,a)$.}
    \label{fig::illustration}
\end{figure}

Given action $a \in \mathcal{A}$, for $x,x^+ \in \mathbb{R}^d$, we denote $w(x,x^+,a) := \mathrm{P}(x,x^+|a) \geq 0$ and denote the co-occurrence matrix 
\begin{align}
	M(a) := \left(w(x,x^+,a)\right)_{x,x^+ \in \mathbb{R}^d}.
\end{align}
Note that different from the adjacency matrix formulated in \citet{haochen2021provable}, our conditional co-occurrence matrix $M(a)$ is asymmetric because typically we have $w(x,x^+,a)\neq w(x^+,x,a)$. Also, compared with the co-occurrence matrix proposed in \citet{zhang2024look}, our conditional co-occurrence matrix has an additional conditional relationship on the action variable $a \in \mathcal{A}$.
In Figure \ref{fig::illustration}, we illustrate how a conditioned co-occurrence matrix $M(a)$ is formulated by a two-state car-moving toy example.

We denote the marginal probabilities as $w(x) := \sum_{x^+ \in \mathcal{X}} w(x,x^+,a)$ and $w(x^+|a) := \sum_{x \in \mathcal{X}} w(x,x^+,a)$. Then according to the data generation process, we have $w(x) = \mathrm{P}_X(x)$ and $w(x^+|a) = \mathrm{P}_{X|a}(x^+)$.
Moreover, we denote the normalized co-occurance matrix as $\bar{M}(a) := D^{-1/2} M(a) D_+^{-1/2}(a)$, where $D := \mathrm{diag}(w(x))_{x \in \mathcal{X}}$ and $D_+(a) := \mathrm{diag}(w(x^+|a))_{x^+ \in \mathcal{X}}$.

In Theorem \ref{thm::spectral_jepa}, we derive the equivalence between JEPA risk and the matrix factorization of $\bar{M}(a)$.
\begin{theorem}[Equivalence between JEPA  and matrix factorization]\label{thm::spectral_jepa}
	Given $a \in \mathcal{A}$, for normalized embedding functions $f(x)$ and $g(f(x),a)$, we have
	\begin{align}
		\mathcal{R}_{\mathrm{JEPA}} (f,g,a) 
		= \|\bar{M}(a) - G(F,a)^\top F\|^2 + \mathrm{const.},
	\end{align}
	where $F := \big[\sqrt{w(x|a)}f(x)\big]_{x \in \mathcal{X}}$ and $G(F,a):=\big[\sqrt{w(x)}g(f(x),a)\big]_{x \in \mathcal{X}}$.
\end{theorem}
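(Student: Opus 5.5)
Following the spectral contrastive learning template of \citet{haochen2021provable}, we expand the Frobenius norm on the right-hand side and match it term by term with the risk in \eqref{eq::def_jeparisk}, taking $\mathcal{X}$ to be finite (or countable) so that all sums are well defined. Reading $F$ and $G(F,a)$ as matrices whose columns are indexed by $x\in\mathcal{X}$, the $(x,x^+)$ entry of $G(F,a)^\top F$ is $\sqrt{w(x)}\sqrt{w(x^+|a)}\,g(f(x),a)^\top f(x^+)$. The entry of $\bar M(a)$ is $w(x,x^+,a)/\sqrt{w(x)w(x^+|a)}$. Expanding the square gives three terms.

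The first term, $\|\bar M(a)\|^2$, does not depend on $(f,g)$ and goes into the constant. The cross term is
\begin{align*}
-2\sum_{x,x^+}\frac{w(x,x^+,a)}{\sqrt{w(x)w(x^+|a)}}\sqrt{w(x)w(x^+|a)}\,g(f(x),a)^\top f(x^+) = -2\,\mathbb{E}_{x}\mathbb{E}_{x^+\sim \mathrm{P}(\cdot|x,a)}\, g(f(x),a)^\top f(x^+).
\end{align*}
The quadratic term is
\begin{align*}
\sum_{x,x'} w(x)\,w(x'|a)\,\big[g(f(x),a)^\top f(x')\big]^2 = \mathbb{E}_{x\sim \mathrm{P}_X,\,x'\sim \mathrm{P}_{X|a}}\big[g(f(x),a)^\top f(x')\big]^2,
\end{align*}
which is exactly the uniformity term of $\mathcal{R}_{\mathrm{JEPA}}$.

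On the risk side, we expand the predictive term as $\mathbb{E}\|g\|^2 + \mathbb{E}\|f(x^+)\|^2 - 2\,\mathbb{E}\,g^\top f(x^+)$. Here we invoke the normalization hypothesis: with $\|g(f(x),a)\|=\|f(x^+)\|=1$, the first two pieces equal $2$. The predictive term is then $2-2\,\mathbb{E}\,g^\top f(x^+)$, so it matches the cross term up to a constant. Adding the uniformity term completes the identity, with const.\ $=2-\|\bar M(a)\|^2$.

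The main obstacle is bookkeeping rather than analysis. We must check that $w(x,x^+,a)=\mathrm{P}_X(x)\mathrm{P}(x^+|x,a)$, which follows from the generation process, so that the cross-term weights collapse exactly. We must also keep the two different marginals, $w(x)$ for $G$ and $w(x^+|a)$ for $F$, attached to the correct factor. Note that the statement writes $w(x|a)$ in $F$, which should be read as the next-state marginal $\mathrm{P}_{X|a}$. We assume $w(x)>0$ and $w(x^+|a)>0$ so that $\bar M(a)$ is defined. If "normalized" is instead intended in an expected-norm sense, the same argument works, since only the expectations $\mathbb{E}\|g\|^2$ and $\mathbb{E}\|f(x^+)\|^2$ need to be constant.
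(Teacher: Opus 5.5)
Your proposal is correct and is essentially the paper's proof run in the opposite direction. The paper expands $\mathcal{R}_{\mathrm{JEPA}}$ as weighted sums and uses $\|g(f(x),a)\|=\|f(x^+)\|=1$ to reduce the predictive term to $2-2\sum_{x,x^+} w(x,x^+,a)\,g(f(x),a)^\top f(x^+)$. It then completes the square and arrives at the same constant $2-\|\bar M(a)\|^2$ you identify; the paper's displayed constant actually omits the sum over $(x,x^+)$, so your version is the correct one. One small caveat concerns your parenthetical ``or countable'': for countably infinite $\mathcal{X}$ the constant $\|\bar M(a)\|^2$ can diverge (e.g.\ deterministic bijective dynamics make $\bar M(a)$ a permutation matrix with $\|\bar M(a)\|^2=|\mathcal{X}|$), so the identity should be stated for finite $\mathcal{X}$, which is the paper's implicit setting.
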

In the following sections, for notational simplicity, we denote the matrix factorization risk as 
$\mathcal{R}_{ \mathrm{S\text{-}JEPA}} := \|\bar{M}(a) - G(F,a)^\top F\|^2$.

\subsection{Action Planning}

In the stage of action planning, we perform trajectory optimization in the latent space. For a $T$-step planning, given an initial observation $x_0$, we randomly initialize a candidate action sequence $(a_0, \ldots, a_{T-1})$ and iteratively rollout predicted latent states up to a planning horizon. Action planning is performed by optimizing the action sequence to minimize the difference between the predicted latent and the latent of goal observation $z_g = f(x_g)$. Specifically, for $t = 0,\ldots,T-1$, given predictor $g$, we denote
\begin{align}
    \hat{z}_{t+1} = g(\hat{z}_t,a_t)
\end{align}
and 
\begin{align}
    \mathcal{L}_{\mathrm{plan}}(a_0,\ldots,a_{T-1};f,g,x_0, x_g) 
    &= \|\hat{z}_T - z_g\|^2
    = \|g\left(\cdots g(f(x_0),a_0)\ldots, a_{T-1}\right) - f(x_g)\|^2.
\end{align}
Then given the optimal encoder $f^*$ and predictor $g^*$, for an initial observation $x_0$, we denote the optimal action sequence as
\begin{align}
    (a_0^*, \ldots, a_{T-1}^*) = \arg\min_{a_0,\ldots,a_{T-1}} \mathcal{L}_{\mathrm{plan}}(a_0,\ldots,a_{T-1};f^*,g^*,x_0, x_g).
\end{align}

\subsection{Evaluation}
Recall that in pretraining and action planning, the prediction is conducted in the latent space. By contrast, the planned actions are to be utilized in the input-space downstream tasks. Therefore, we evaluate the planned actions by measuring the predicted goal observation in the input space.
Specifically, given initial observation $x_0$, we obtain the $T$-step interatively rollout prediction $\hat{x}_T$ by 
$\mathrm{P}(\hat{x}_{t+1}| \hat{x}_{t}, a)$ for $t=1,\ldots, T-1$ and $\mathrm{P}(\hat{x}_1| x_0, a)$. By Bayes formula, we have $\mathrm{P}(\hat{x}_{t+1}| \hat{x}_{t}, a) = w(\hat{x}_{t},\hat{x}_{t+1},a)/w(\hat{x}_{t})$.
Then we have
\begin{align}
	V(a_0,\ldots,a_{T-1}) 
	&= \mathrm{P}(\hat{x}_T=x_g|x_0, a_0,\ldots,a_{T-1})
	\nonumber\\
	&= \sum_{x_1,\ldots,x_{T-1}} \mathrm{P}(\hat{x}_T=x_g|x_{T-1}, a_{T-1}) \cdots \mathrm{P}(\hat{x}_1=x_1|x_0, a_0)
	\nonumber\\
	&= \sum_{x_1,\ldots,x_{T-1}} \frac{w(x_{T-1},x_g,a_{T-1})}{w(x_{T-1})} \cdots \frac{w(x_0,x_1,a_0)}{w(x_0)}.
\end{align}
By denoting $a^{**} = \arg\max_a V(a)$,
we define the pointwise regret as
\begin{align}
	r(a_0,\ldots,a_{T-1})  = V(a^{**}) - V(a).
\end{align}
Then we denote the expected planning regret as
\begin{align}
	\mathcal{E}(a) = \mathbb{E}_{x_0,x_g} 	r(a_0,\ldots,a_{T-1}).
\end{align}

\subsection{Finite-Sample Optima}
To derive a finite-sample error bound for the JEPA framework, we first define the empirical JEPA risk in Definition \ref{def::empirical_jepa}.

\begin{definition}[Empirical JEPA risk]\label{def::empirical_jepa}
	Consider a dataset $\hat{\mathcal{X}} = \{x_1, \ldots ,x_n\}$ containing $n$ data points i.i.d. sampled from $\mathrm{P}_X$. Let $\hat{\mathrm{P}}_X$ be the uniform distribution over $\hat{\mathcal{X}}$. Let $\hat{\mathrm{P}}_{x,x'}$ be the uniform distribution over data pairs $(x_i,x_j)$ where $i\neq j$. We define the empirical spectral JEPA loss of a feature extractor $f$ as 
	\begin{align}
		\hat{\mathcal{R}}_n(f,g,a) 
		&:= -2 \mathbb{E}_{x \in \hat{\mathrm{P}}_X, x^+ \sim \mathrm{P}(\cdot|x,a)} \left[g(f(x),a)^\top f(x')\right]
		\nonumber\\
		&+ \mathbb{E}_{x, x^- \in \hat{\mathrm{P}}_{x,x^-}, x' \sim \mathrm{P}(\cdot|x^-,a)} \left[g(f(x),a)^\top f(x')\right]^2.
	\end{align}
\end{definition}

Lemma \ref{lem::unbias1} shows that $\hat{\mathcal{R}}_n(f) $ is an unbiased estimator of population spectral JEPA loss.
\begin{lemma}\label{lem::unbias1}
	$\hat{\mathcal{R}}_n(f,g,a)$ is an unbiased estimator of $\mathcal{R}(f,g,a)$, i.e.,
	\begin{align}
		\mathbb{E}_{\hat{\mathcal{X}}}\left[\hat{\mathcal{R}}_n(f,g,a) \right] = \mathcal{R}(f,g,a).
	\end{align}
\end{lemma}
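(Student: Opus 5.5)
The plan is to prove the identity term by term, using linearity of expectation over the i.i.d.\ draw of $\hat{\mathcal{X}}$. First I will fix the target $\mathcal{R}(f,g,a)$. For normalized embeddings, expanding the square in \eqref{eq::def_jeparisk} gives
\[
\|g(f(x),a) - f(x^+)\|^2 = \|g(f(x),a)\|^2 + \|f(x^+)\|^2 - 2\, g(f(x),a)^\top f(x^+),
\]
and the two norm terms are constants. So, up to an additive constant (the same one as in Theorem~\ref{thm::spectral_jepa}), the population risk is
\[
\mathcal{R}(f,g,a) = -2\,\mathbb{E}_{x\sim \mathrm{P}_X}\mathbb{E}_{x^+\sim \mathrm{P}(\cdot|x,a)}\big[g(f(x),a)^\top f(x^+)\big] + \mathbb{E}_{x\sim \mathrm{P}_X,\, x'\sim \mathrm{P}_{X|a}}\big[g(f(x),a)^\top f(x')\big]^2 .
\]
I take $\mathcal{R}$ in the lemma to mean this expression. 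I also read the $f(x')$ in the first term of Definition~\ref{def::empirical_jepa} as $f(x^+)$, since $x^+$ is the variable that term draws.

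\textbf{First term.} Define $h(x) := \mathbb{E}_{x^+\sim\mathrm{P}(\cdot|x,a)}[g(f(x),a)^\top f(x^+)]$. The empirical first term is then $-\tfrac{2}{n}\sum_{i=1}^n h(x_i)$. Each $x_i\sim \mathrm{P}_X$, so $\mathbb{E}_{\hat{\mathcal{X}}}[h(x_i)] = \mathbb{E}_{x\sim\mathrm{P}_X}[h(x)]$. Averaging over $i$ recovers the population first term exactly.

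\textbf{Second term.} Define $q(x,x^-) := \mathbb{E}_{x'\sim\mathrm{P}(\cdot|x^-,a)}[g(f(x),a)^\top f(x')]^2$. The empirical second term is $\tfrac{1}{n(n-1)}\sum_{i\neq j} q(x_i,x_j)$. For $i\neq j$, independence gives that $(x_i,x_j)$ has law $\mathrm{P}_X\otimes\mathrm{P}_X$. Hence $\mathbb{E}[q(x_i,x_j)] = \mathbb{E}_{x\sim\mathrm{P}_X}\mathbb{E}_{x^-\sim\mathrm{P}_X}\mathbb{E}_{x'\sim\mathrm{P}(\cdot|x^-,a)}[\cdot]$. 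The remaining step is to identify the law of $x'$ after integrating out $x^-$:
\[
\sum_{x^-} w(x^-)\,\mathrm{P}(x'|x^-,a) = \sum_{x^-} w(x^-,x',a) = w(x'|a) = \mathrm{P}_{X|a}(x').
\]
This uses $\mathrm{P}(x'|x^-,a) = w(x^-,x',a)/w(x^-)$ from the evaluation section. So $x'$ is distributed as $\mathrm{P}_{X|a}$ and is independent of $x$, and the population second term follows.

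The main obstacle is this second term. Unbiasedness depends on excluding the diagonal $i=j$, which is exactly why $\hat{\mathrm{P}}_{x,x'}$ is uniform over pairs with $i\neq j$. With $i=j$, $x$ and $x'$ would be coupled through $x^-=x$, and the estimator would target $\mathbb{E}_{x}\mathbb{E}_{x'\sim\mathrm{P}(\cdot|x,a)}$ instead of the product measure. The other delicate point is the marginalization identity above, which rests on the data-generating convention $w(x)=\mathrm{P}_X(x)$. Once both points are handled, combining the two terms yields $\mathbb{E}_{\hat{\mathcal{X}}}[\hat{\mathcal{R}}_n(f,g,a)] = \mathcal{R}(f,g,a)$.
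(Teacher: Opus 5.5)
The paper states this lemma without proof, so there is no argument of its own to compare against. Your proof is correct, and it is the standard argument behind the analogous unbiasedness lemma for the spectral contrastive loss in \citet{haochen2021provable}. For the positive term you use linearity over the i.i.d.\ sample. For the negative term you use independence of distinct indices, then marginalize out $x^-$ to get $x'\sim\mathrm{P}_{X|a}$ independent of $x$.

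Your readings of the definition are the right ones: replacing $f(x')$ by $f(x^+)$ in the first term, squaring inside the expectation, and taking $\mathcal{R}$ to be the constant-free expression. The only slip is the constant you drop. It is $2$ (from the normalized norms), not the constant $2-\|\bar M(a)\|_F^2$ of Theorem~\ref{thm::spectral_jepa}; relative to $\mathcal{R}_{\mathrm{S\text{-}JEPA}}$, your target is offset by $\|\bar M(a)\|_F^2$. This does not affect the argument, since unbiasedness can only hold for the constant-free version anyway.
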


\begin{definition}
	Given dataset $\hat{\mathcal{X}}$, we sample a subset of tuples as follows: first sample a permutation $\pi : [n] \to [n]$. Then given $a \in \mathcal{A}$, we sample tuple $S = \{(z_i,z_i^+,z_i')\}_{i=1}^{n/2}$ as follows:
	\begin{align}
		z_i = x_{\pi_{2i-1}}, \qquad z_i^+ \sim \mathrm{P}(\cdot|z_i,a), \qquad z_i' \sim \mathrm{P}(\cdot|x_{\pi_{2i}},a).
	\end{align} 
	We define the following loss on $S$:
	\begin{align}
		\hat{\mathcal{R}}_S(f,g,a) = \frac{1}{n/2}\sum_{i=1}^{n/2}\left[\big(g(f(z_i),a)^\top f(z_i')\big)^2 - 2g(f(z_i),a)^\top f(z_i^+)\right].
	\end{align}
\end{definition}

In Lemma \ref{lem::unbias2}, we see that $\hat{\mathcal{R}}_S(f,g,a)$ is an unbiased estimator of $\hat{\mathcal{R}}_n(f,g,a)$.
\begin{lemma}\label{lem::unbias2}
	Given $\hat{\mathcal{X}}$, we have
	\begin{align}
		\mathbb{E}_S \hat{\mathcal{R}}_S(f,g,a) = \hat{\mathcal{R}}_n(f,g,a).
	\end{align}
\end{lemma}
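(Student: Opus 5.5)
The plan is to condition on the dataset $\hat{\mathcal{X}}$ and average $\hat{\mathcal{R}}_S$ first over the fresh samples $z_i^+, z_i'$ and then over the random permutation $\pi$. Linearity of expectation reduces this to computing the expectation of a single summand, because the $n/2$ summands in $\hat{\mathcal{R}}_S(f,g,a)$ turn out to be identically distributed. Once we show that $\mathbb{E}_S$ of the $i$-th summand equals the corresponding term of $\hat{\mathcal{R}}_n(f,g,a)$, the prefactor $\frac{1}{n/2}$ cancels the $n/2$ identical contributions.

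\textbf{Step 1 (marginals of the permuted pairs).} For a uniformly random permutation $\pi$ and a fixed index $i$, the index $\pi_{2i-1}$ is uniform on $[n]$. Hence $z_i = x_{\pi_{2i-1}}$ has law $\hat{\mathrm{P}}_X$. The ordered pair $(\pi_{2i-1},\pi_{2i})$ is uniform over ordered pairs of distinct indices. Hence $(z_i, x_{\pi_{2i}})$ has law $\hat{\mathrm{P}}_{x,x^-}$, the uniform distribution over pairs $(x_j,x_l)$ with $j\neq l$. Both facts follow from the exchangeability of a uniform permutation, by a counting argument: there are $(n-2)!$ permutations for each prescribed ordered pair.

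\textbf{Step 2 (conditional sampling).} Given $\pi$, the samples are drawn as $z_i^+\sim \mathrm{P}(\cdot|z_i,a)$ and $z_i'\sim\mathrm{P}(\cdot|x_{\pi_{2i}},a)$. Applying the tower rule $\mathbb{E}_S=\mathbb{E}_\pi\mathbb{E}_{z^+,z'|\pi}$ then gives
\begin{align*}
\mathbb{E}_S\, g(f(z_i),a)^\top f(z_i^+) &= \mathbb{E}_{x\sim\hat{\mathrm{P}}_X,\,x^+\sim \mathrm{P}(\cdot|x,a)}\, g(f(x),a)^\top f(x^+),\\
\mathbb{E}_S \big(g(f(z_i),a)^\top f(z_i')\big)^2 &= \mathbb{E}_{(x,x^-)\sim\hat{\mathrm{P}}_{x,x^-},\,x'\sim\mathrm{P}(\cdot|x^-,a)} \big(g(f(x),a)^\top f(x')\big)^2 .
\end{align*}
Summing these over $i$, with the factor $-2$ on the first term and dividing by $n/2$, reproduces Definition~\ref{def::empirical_jepa}.

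\textbf{Main obstacle.} The main point of care is notational rather than technical. In Definition~\ref{def::empirical_jepa}, the linear term is written with $f(x')$. It must be read as $f(x^+)$, since $x^+$ is the variable sampled there and that reading matches the linear term $-2g(f(z_i),a)^\top f(z_i^+)$ of $\hat{\mathcal{R}}_S$. I would state this reading explicitly. The only other check is that the pairing across different $i$ is dependent but irrelevant: expectation is linear, so only the per-$i$ marginals from Step 1 matter. I would also assume $n$ is even so that $n/2$ is an integer.
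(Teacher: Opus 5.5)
Your proposal is correct. You use the uniform marginals of $\pi_{2i-1}$ and of the ordered distinct pair $(\pi_{2i-1},\pi_{2i})$ (from the $(n-2)!$ count), the tower rule over the fresh draws $z_i^+, z_i'$, and linearity across the dependent summands. Together these give $\mathbb{E}_S\hat{\mathcal{R}}_S(f,g,a)=\hat{\mathcal{R}}_n(f,g,a)$.

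The paper states this lemma without proof; the appendix only proves the theorems. So there is no argument in the paper to compare against. Your argument is the standard permutation-marginal proof used for the analogous unbiasedness lemma for the spectral contrastive loss in HaoChen et al.

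Besides the $f(x')\to f(x^+)$ correction you flag, state one more reading explicitly. The quadratic term $\mathbb{E}\left[g(f(x),a)^\top f(x')\right]^2$ in Definition~\ref{def::empirical_jepa} must mean the expectation of the square, as in the population risk. Under the other reading, $(\mathbb{E}[\cdot])^2$, the identity would in general fail by Jensen's inequality, since $\hat{\mathcal{R}}_S$ averages squares.
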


Then given $a \in \mathcal{A}$, we define the finite-sample optima of the encoder $f$ and predictor $g$ as 
\begin{align}
	\hat{f}, \hat{g} = \arg\min_{f,g} \hat{\mathcal{R}}_S(f,g,a).
\end{align}
Then given the initial observation $x_0$ and the goal observation $x_g$, we define the finite-sample optimal action sequence as
\begin{align}
	(\hat{a}_0,\ldots, \hat{a}_{T-1}) = \arg\min_{a_0,\ldots,a_{T-1}} \mathcal{L}_{\mathrm{plan}}(a_0,\ldots,a_{T-1};\hat{f},\hat{g},x_0, x_g).
\end{align}

\section{Generalization Error Bounds}

In this section, we derive the generalization error bounds for the downstream action planning error of JEPA-based world models.

\subsection{Evaluation Error for action planning}

First, we study the evaluation error for single-step and multi-step action planning. 
We note that Theorems \ref{thm::bound_T1} and \ref{thm::bound_T} are among the key theoretical results of this paper, revealing why an action series planned in the latent space generalizes well in the downstream evaluation on input-space samples.

\begin{theorem}[Single-Step Planning]\label{thm::bound_T1}
	When $T=1$, if we assume that $\mathbb{E}_{x_g}w(x_g|a) = \mathbb{E}_{x_g}w(x_g|a')$ for $a, a' \in \mathcal{A}$. For arbitrary $f$ and $g$, if we define
	\begin{align}
		\tilde{a} = \arg\min_a \mathcal{L}_{\mathrm{plan}}(a;f,g),
	\end{align}
	then we have
	\begin{align}
		\mathcal{E}(\tilde{a}) 
		&\leq 2c_0 \cdot \max_a \sqrt{\mathcal R_{ \mathrm{S\text{-}JEPA}}(f,g,a)},
	\end{align}
	where 
	$c_0 := \mathbb{E}_{x_0,x_g} \sqrt{\max_a w(x_g|a)/w(x_0)}$.
\end{theorem}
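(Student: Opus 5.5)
The plan is to compare, for fixed $(x_0,x_g)$, the true value $V(a)=w(x_0,x_g,a)/w(x_0)$ with a surrogate value built from the latent model. The regret then splits into a planning-optimality part, which is nonpositive, and two approximation parts.

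\textbf{Step 1 (surrogate value).} Write $\bar M(a)_{x_0,x_g} = w(x_0,x_g,a)/\sqrt{w(x_0)w(x_g|a)}$. Then
\begin{align}
V(a) = \sqrt{\tfrac{w(x_g|a)}{w(x_0)}}\,\bar M(a)_{x_0,x_g}.
\end{align}
Let $\hat V(a) := \sqrt{w(x_g|a)/w(x_0)}\,[G(F,a)^\top F]_{x_0,x_g}$, where
\begin{align}
[G(F,a)^\top F]_{x_0,x_g} = \sqrt{w(x_0)w(x_g|a)}\,g(f(x_0),a)^\top f(x_g).
\end{align}
By the triangle inequality,
\begin{align}
r(\tilde a) = \big[V(a^{**})-\hat V(a^{**})\big] + \big[\hat V(a^{**})-\hat V(\tilde a)\big] + \big[\hat V(\tilde a)-V(\tilde a)\big].
\end{align}
Each outer term is bounded by $\sqrt{\max_a w(x_g|a)/w(x_0)}\,\max_a |\bar M(a)-G(F,a)^\top F|_{x_0,x_g}$. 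Each entry is bounded by the Frobenius norm, so each outer term is at most $\sqrt{\max_a w(x_g|a)/w(x_0)}\,\max_a\sqrt{\mathcal R_{\mathrm{S\text{-}JEPA}}(f,g,a)}$. Taking $\mathbb E_{x_0,x_g}$ turns the prefactor into $c_0$, and the two outer terms together give $2c_0\max_a\sqrt{\mathcal R_{\mathrm{S\text{-}JEPA}}}$.

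\textbf{Step 2 (middle term is nonpositive in expectation).} This is where the definition of $\tilde a$ and the assumption $\mathbb E_{x_g}w(x_g|a)=\mathbb E_{x_g}w(x_g|a')$ enter. For normalized embeddings, expand
\begin{align}
\mathcal L_{\mathrm{plan}}(a)=\|g(f(x_0),a)\|^2+\|f(x_g)\|^2-2\,g(f(x_0),a)^\top f(x_g).
\end{align}
The norms are constant, so minimizing $\mathcal L_{\mathrm{plan}}$ is the same as maximizing the inner product $g(f(x_0),a)^\top f(x_g)$. Now $\hat V(a)=w(x_g|a)\,g(f(x_0),a)^\top f(x_g)$. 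Because the weight $w(x_g|a)$ depends on $a$, $\tilde a$ does not necessarily maximize $\hat V$ pointwise.

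To handle this, I would use the assumption to argue that, after averaging over $x_g$, the action-dependent weight can be replaced by the common constant $\bar w=\mathbb E_{x_g}w(x_g|a)$. An alternative is to define the surrogate with the weight fixed (for example $\sqrt{\max_{a'}w(x_g|a')}$), so that $\hat V$ is exactly a monotone function of the inner product. Then $\hat V(a^{**})\le\hat V(\tilde a)$ holds pointwise, and the surrogate error is still controlled by the same prefactor $\sqrt{\max_a w(x_g|a)/w(x_0)}$. The cost is an extra mismatch term $(\sqrt{w(x_g|a)}-\sqrt{\max w})\,\cdot$ inner product, and the equal-marginal assumption would be invoked to argue that this term cancels across $a^{**}$ and $\tilde a$ in expectation.

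\textbf{Main obstacle.} The hardest step is exactly this mismatch between the latent planning objective, an unweighted inner product, and the weighted entry of $G^\top F$. The approach I would try first is the fixed-weight surrogate combined with the equal-marginal assumption. The remaining steps, entrywise-by-Frobenius bounds and taking expectations, are routine.
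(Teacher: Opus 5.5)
\textbf{There is a genuine gap in Step 2, the same step the paper's proof gets wrong.}

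Your Step 1 matches the paper's proof. The paper writes $r(\tilde a;x_0,x_g)$ as your two outer approximation terms plus $\big[w(x_g|a^{**})-w(x_g|\tilde a)\big]g(f(x_0),a^{**})^\top f(x_g)+w(x_g|\tilde a)\big[g(f(x_0),a^{**})^\top f(x_g)-g(f(x_0),\tilde a)^\top f(x_g)\big]$, and these two pieces sum to your middle term $\hat V(a^{**})-\hat V(\tilde a)$. It drops the second piece as nonpositive. It then removes the weight-mismatch piece by invoking $\mathbb{E}_{x_g}w(x_g|a)=\mathbb{E}_{x_g}w(x_g|a')$ after taking expectations, which is your first suggested route.

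That route fails. The hypothesis concerns a \emph{fixed} action, but $a^{**}=a^{**}(x_0,x_g)$ and $\tilde a=\tilde a(x_0,x_g)$ are chosen as functions of $x_g$. So nothing forces $\mathbb{E}_{x_0,x_g}\big[w(x_g|a^{**})-w(x_g|\tilde a)\big]=0$. Since $a^{**}$ maximizes $w(x_0,x_g,a)$, it favours actions with large $w(x_g|a)$, and this expectation is typically strictly positive. Your fixed-weight surrogate has the same defect: its mismatch term again evaluates $w(x_g|\cdot)$ at $x_g$-dependent actions.

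No argument can close Step 2 under the stated hypothesis, because the inequality itself fails. Take the following setup.
\begin{itemize}
\item $\mathrm{P}_X$ and $x_g$ are uniform on two points $u_1,u_2$.
\item There are two actions with $\mathrm{P}(x^+|x,a)=q_a(x^+)$ independent of $x$, where $q_{a_1}(u_1)=q_{a_2}(u_2)=0.9$ and $q_{a_1}(u_2)=q_{a_2}(u_1)=0.1$. Then $\mathbb{E}_{x_g}w(x_g|a)=1/2$ for both actions.
\item With orthonormal $e_1,e_2\in\mathbb{R}^k$, set $f\equiv e_1$, $g(\cdot,a_1)\equiv e_1$, and $g(\cdot,a_2)\equiv\cos\epsilon\,e_1+\sin\epsilon\,e_2$.
\end{itemize}
Each $\bar M(a)$ has entries $\sqrt{w(x)q_a(x^+)}$, so $\mathcal R_{\mathrm{S\text{-}JEPA}}(f,g,a_1)=0$ and $\mathcal R_{\mathrm{S\text{-}JEPA}}(f,g,a_2)=(1-\cos\epsilon)^2$. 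On the other hand $\tilde a=a_1$ for every $(x_0,x_g)$, so $\mathcal E(\tilde a)=\tfrac12(0.9-0.1)=0.4$. The claimed bound is $2\sqrt{1.8}\,(1-\cos\epsilon)$, which tends to $0$.

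The root cause is that $\bar M(a)$ is normalized by $D_+(a)$. A good fit therefore only makes $g(f(x_0),a)^\top f(x_g)$ track $w(x_0,x_g,a)/(w(x_0)w(x_g|a))$, whose maximizer over $a$ differs from that of $V(a)$.

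Your plan does go through under the pointwise hypothesis $w(x_g|a)=w(x_g|a')$ for every $x_g$. Then $\hat V(a)=w(x_g)\,g(f(x_0),a)^\top f(x_g)$ is maximized by $\tilde a$ for normalized embeddings, so the middle term is $\le 0$ pointwise. Step 1 then gives exactly $2c_0\max_a\sqrt{\mathcal R_{\mathrm{S\text{-}JEPA}}(f,g,a)}$.
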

Theorem \ref{thm::bound_T1} shows that for single-step planning, the expected planning regret is upper bounded by the pretraining risk of S-JEPA. That is, if an encoder $f$ and predictor $g$ makes the pretraining risk small enough, then the action $\tilde{a}$ optimized under $f$ and $g$ is good enough for the downstream action planning task. 

\begin{theorem}[Multi-Step Planning]\label{thm::bound_T}
	When $T>1$, if we assume that $\mathbb{E}_{x_g}w(x_g|a) = \mathbb{E}_{x_g}w(x_g|a')$ for $a, a' \in \mathcal{A}$, and $M(a)$ is deterministic, i.e., given $a \in \mathcal{A}$ and $x \in \mathcal{X}$, for $x^+ \in \mathcal{X}$ there exists $w(x,x^+,a)=1$ and $w(x,x',a)=0$ for $x' \neq x^+$. Then for arbitrary $f$ and $g$, by defining
	\begin{align}
		\tilde{a}_0, \ldots, \tilde{a}_{T-1} = \arg\min_{a_0,\ldots a_{T-1}} \mathcal{L}_{\mathrm{plan}}(a_0,\ldots,a_{T-1};f,g),
	\end{align}
	we have
	\begin{align}
		\mathcal{E}(\tilde{a}_0,\ldots,\tilde{a}_T) 
		&\leq 2T c_3  \cdot \sqrt{\max_a \mathcal{R}_{ \mathrm{S\text{-}JEPA}}(f,g,a)},
	\end{align}
	where $c_3 := \max_{a,x_0,x_g}\sqrt{w(x_g|a)/w(x_0)}$.
\end{theorem}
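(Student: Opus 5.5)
Under the determinism assumption, each action acts as a map on observations: for every $x$ and $a$ there is a unique successor $\phi_a(x)$ with $w(x,\phi_a(x),a)=1$ (after the paper's normalisation). Hence $V(a_0,\ldots,a_{T-1})=\mathbf 1\{\phi_{a_{T-1}}\circ\cdots\circ\phi_{a_0}(x_0)=x_g\}$, and the regret is at most the probability that the latent-planned sequence $(\tilde a_t)$ misses $x_g$ while some sequence reaches it. The plan is to reduce the $T$-step regret to $T$ single-step comparisons and bound each one exactly as in Theorem \ref{thm::bound_T1}. The errors then accumulate additively, which gives the factor $T$. The worst-case constant $c_3$ replaces the averaged $c_0$ because intermediate states are not distributed like $x_0$.

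The first step is the basic one-step estimate, which I will take from the proof of Theorem \ref{thm::bound_T1}. By Theorem \ref{thm::spectral_jepa}, the entry $(x,x^+)$ of $\bar M(a)-G(F,a)^\top F$ is
\begin{align*}
\frac{w(x,x^+,a)}{\sqrt{w(x)w(x^+|a)}}-\sqrt{w(x)w(x^+|a)}\;g(f(x),a)^\top f(x^+).
\end{align*}
Multiplying by $\sqrt{w(x^+|a)/w(x)}$ turns this into $\mathrm P(x^+|x,a)-w(x^+|a)\,g(f(x),a)^\top f(x^+)$. Since $\max_{x,x^+}|\cdot|\le\|\cdot\|_F$, we get
\begin{align*}
\bigl|\mathrm P(x^+\mid x,a)-w(x^+|a)\,g(f(x),a)^\top f(x^+)\bigr|\le\sqrt{w(x^+|a)/w(x)}\,\sqrt{\mathcal R_{\mathrm{S\text{-}JEPA}}(f,g,a)}.
\end{align*}
So, up to the weights, the latent score $g(f(x),a)^\top f(x^+)$ is a uniform proxy for the true transition probability. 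Maximising the proxy, which for normalized embeddings is equivalent to minimising $\|g(f(x),a)-f(x^+)\|^2$, loses at most twice this error. The marginal-equality assumption is what lets the factor $w(x^+|a)$ be handled uniformly across actions.

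The second step is the telescoping over the horizon. Let $\hat z_t$ be the latent rollout under $\tilde a$. Let $x_t$ and $x_t^{**}$ be the true deterministic trajectories under $\tilde a$ and $a^{**}$ respectively. I write $V(a^{**})-V(\tilde a)$ as a sum of $T$ hybrid terms. Hybrid $t$ uses the true dynamics for the first $t$ steps and the latent predictor $g$ afterwards. Consecutive hybrids differ in a single transition, where $g(f(x_t),a)$ is swapped for $f(\phi_a(x_t))$. Applying the one-step estimate to each difference, and bounding the ratio $\sqrt{w(\cdot|a)/w(\cdot)}$ by its maximum $c_3$ over $a$ and over both arguments, gives $2c_3\sqrt{\max_a\mathcal R_{\mathrm{S\text{-}JEPA}}}$ per step. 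Summing over $t=0,\ldots,T-1$ gives the claimed $2Tc_3\sqrt{\max_a\mathcal R_{\mathrm{S\text{-}JEPA}}}$. The expectation over $(x_0,x_g)$ is then trivial, because the bound is uniform.

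The main obstacle is that the latent rollout $\hat z_{t+1}=g(\hat z_t,a_t)$ composes $g$ with its own output rather than with $f(x_t)$. This means the one-step bound, which concerns $g(f(x),a)$, does not apply directly once the rollout leaves the image of $f$. I would first try to exploit determinism together with normalization. If the one-step error is small, $g(f(x),a)$ is close to $f(\phi_a(x))$, so $\hat z_t$ stays close to $f(x_t)$. Controlling the deviation additively then needs some Lipschitz-type stability of $g$ in its first argument. If that is unavailable, my fallback is to compare the two sequences only through the final planning objective and to telescope along the true states, which avoids composing $g$ but may need a factor-$2$ slack at each step.
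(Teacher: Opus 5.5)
Your entrywise estimate in Step 1 is correct, but your main route has a genuine gap, and the gap is wider than the one you flag. You telescope between the true and the latent dynamics, which is a simulation-lemma argument. However, $\mathcal R_{\mathrm{S\text{-}JEPA}}$ involves $g$ only through $g(f(x),a)$ at states with $w(x)>0$. Since the theorem is stated for arbitrary $f,g$, the values $g(\hat z,a)$ at $\hat z=g(f(x_0),a_0)$ are completely unconstrained, because $\hat z$ in general lies outside $f(\mathcal X)$. So a swap made at step $t$ cannot be carried through the remaining $T-t-1$ applications of $g$.

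Even if you grant a Lipschitz $g$, two further steps fail.
\begin{enumerate}
\item Step 1 controls a scalar, weighted matrix entry, not a vector perturbation. Write $\rho:=\max_a\mathcal R_{\mathrm{S\text{-}JEPA}}(f,g,a)$. At $x^+=\phi_a(x)$, Step 1 yields only $\|g(f(x),a)-f(\phi_a(x))\|^2=2-2\,g(f(x),a)^\top f(\phi_a(x))=O(c_3\sqrt{\rho})$. That is an $O(\rho^{1/4})$ swap, not the per-step cost $2c_3\sqrt{\rho}$ you assert. The factor $2$ in Step 1 comes from an argmax comparison, and a swap is not one.
\item The hybrids are latent vectors, so their differences do not sum to $V(a^{**})-V(\tilde a)$. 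Turning $\hat z_T(\tilde a)\approx f(x_g)$ into the event that the true trajectory under $\tilde a$ ends at $x_g$ would require $f$ to separate states, which is not assumed.
\end{enumerate}
Your fallback of comparing ``only through the final planning objective'' does not escape the problem, because that objective is exactly the composed rollout.

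The paper never touches the latent rollout. It writes $V(a^{**})-V(\tilde a)$ as a sum over paths $x_1,\ldots,x_{T-1}$ of products of true transition probabilities. It then telescopes the product difference in the \emph{actions}. The $\ell$-th term has prefix $\prod_{t<\ell}\mathrm P(x_{t+1}|x_t,a^{**}_t)$, suffix $\prod_{t>\ell}\mathrm P(x_{t+1}|x_t,\tilde a_t)$, and middle factor $\mathrm P(x_{\ell+1}|x_\ell,a^{**}_\ell)-\mathrm P(x_{\ell+1}|x_\ell,\tilde a_\ell)$. It bounds the middle factor by $2\sqrt{\max_a w(x_{\ell+1}|a)/w(x_\ell)}\,\sqrt{\rho}\le 2c_3\sqrt{\rho}$ via Theorem~\ref{thm::bound_T1}. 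It then uses determinism to collapse the path sums, so the remaining prefix and suffix weights are probabilities at most $1$, and sums the $T$ terms. This is the ``telescope along true states'' idea you mention but do not develop.

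Be aware that the obstacle you identified is real, and the paper's proof does not resolve it. Invoking Theorem~\ref{thm::bound_T1} at step $\ell$ presumes that $\tilde a_\ell$ maximises $g(f(x_\ell),a)^\top f(x_{\ell+1})$. That does not follow from defining $\tilde a$ through the composed rollout. The paper also omits the term $[w(x_{\ell+1}|a^{**}_\ell)-w(x_{\ell+1}|\tilde a_\ell)]\,g(f(x_\ell),a^{**}_\ell)^\top f(x_{\ell+1})$. The proof of Theorem~\ref{thm::bound_T1} removes this term only after averaging over $x_g$. Your Step 1 relies on the same averaged marginal-equality assumption to treat $w(x^+|a)$ as uniform across actions, but an average over $x_g$ cannot be invoked pointwise at intermediate states. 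If you follow the paper's route, state these two properties as explicit assumptions.
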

In Theorem \ref{thm::bound_T}, we derive the evaluation error bound for multi-step action planning. Compared with Theorem \ref{thm::bound_T1}, the $T$-step planning regret is approximately $T$ times the single-step planning regret, explaining that longer planning horizon leads to higher planning error due to error accumulation. Also note that compared with Theorem \ref{thm::bound_T1}, Theorem \ref{thm::bound_T} requires a slightly stronger assumption that $\bar{M}(a)$ is deterministic. This assumption adheres to real world dynamic systems, where given a prior observation $x_t$ and action $a_t$, the next observation $x_{t+1}$ should be deterministically determined.

\subsection{Error Analysis for JEPA Pretraining Risk}
In the following, we analyze the approximation error and sample error for the S-JEPA pretraining risk respectively.
\begin{theorem}[Approximation Error for Spectral JEPA Risk]\label{thm::approx}
	Given $a \in \mathcal{A}$, the optimal population risk equals
	\begin{align}
		\mathcal R_{ \mathrm{S\text{-}JEPA}}(f^*,g^*,a)
		=\sum_{i>k}\sigma_i^2(a),
	\end{align}
	where $\sigma_i(a)$ is the $i$-th largest singular value of $\bar{M}(a)$.
\end{theorem}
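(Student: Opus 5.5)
The plan is to use Theorem \ref{thm::spectral_jepa} to turn the population-optimal problem into a low-rank matrix approximation, and then apply the Eckart--Young--Mirsky theorem.

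By Theorem \ref{thm::spectral_jepa}, minimizing $\mathcal{R}_{\mathrm{JEPA}}(f,g,a)$ over $(f,g)$ is the same as minimizing $\mathcal R_{\mathrm{S\text{-}JEPA}}(f,g,a)=\|\bar M(a)-G(F,a)^\top F\|^2$, since the two differ only by a constant. So $(f^*,g^*)$ minimizes the factorization risk as well. The key structural observation is that both $F$ and $G(F,a)$ have $k$ rows, because $f$ and $g$ take values in $\mathbb{R}^k$. Hence the product $G(F,a)^\top F$ has rank at most $k$. This gives the lower bound
\[
\mathcal R_{\mathrm{S\text{-}JEPA}}(f,g,a)\;\ge\;\min_{\mathrm{rank}(B)\le k}\|\bar M(a)-B\|_F^2=\sum_{i>k}\sigma_i^2(a).
\]
The last equality is Eckart--Young--Mirsky for the Frobenius norm. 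It applies to any rectangular or asymmetric matrix, so the asymmetry of $\bar M(a)$ causes no difficulty here.

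For the matching upper bound, I will construct $(f,g)$ attaining the truncated SVD. Write $\bar M(a)=\sum_i\sigma_i u_i v_i^\top$ with $u_i$ indexed by $x$ and $v_i$ indexed by $x^+$. The columns of $F$ are $\sqrt{w(x|a)}\,f(x)$, so I define
\[
f(x)_i:=v_i(x)/\sqrt{w(x|a)} .
\]
This is well defined wherever $w(x|a)>0$; states with zero marginal contribute zero columns and can be handled arbitrarily. With this choice, $F=V_k^\top$.

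Next I need $G(F,a)=\Sigma_k U_k^\top$, which requires $g(f(x),a)_i=\sigma_i u_i(x)/\sqrt{w(x)}$. Since $a$ is fixed, this requires $g(\cdot,a)$ to be a function of $f(x)$. So the main obstacle is the case where two distinct $x,x'$ have $f(x)=f(x')$ but the targets $u_i(x)/\sqrt{w(x)}$ differ.

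I would first treat $g$ as an unrestricted function class over the finite state space, which is the setting implicit in Theorem \ref{thm::spectral_jepa}, where $G$ is indexed by $x$. If needed, I would handle collisions by noting that the collision set is nongeneric. Specifically, I would perturb within the rank-$k$ optimal set: replace $F$ by $RV_k^\top$ and $G$ by $R^{-\top}\Sigma_kU_k^\top$ for an invertible $R$. This leaves the product unchanged and can be chosen to separate points whose $v$-coordinates differ. If $v(x)=v(x')$ for such points, a further argument would be needed, which I would try to obtain from the structure of $\bar M(a)$. Otherwise I would state the result under the assumption that the predictor class is rich enough.

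With this construction, $G^\top F=U_k\Sigma_kV_k^\top$, so the risk equals $\sum_{i>k}\sigma_i^2(a)$. Combined with the lower bound, this shows that the minimum value, attained by $(f^*,g^*)$, is exactly $\sum_{i>k}\sigma_i^2(a)$.

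Finally, I would check the ``normalized embedding'' qualifier in Theorem \ref{thm::spectral_jepa}. If it is a constraint on $F$, it is satisfied here: $F$ has orthonormal rows, $FF^\top=I_k$, which is precisely the normalization $\mathbb{E}_{x\sim \mathrm P_{X|a}} f f^\top=I$.
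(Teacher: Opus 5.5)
Your core argument is the paper's. Theorem~\ref{thm::spectral_jepa} reduces the problem to rank-$\le k$ approximation; $\operatorname{rank}(G^\top F)\le k$ together with Eckart--Young--Mirsky gives the lower bound; and a truncated-SVD factorization attains it. You use $F=V_k^\top$, $G=\Sigma_kU_k^\top$, while the paper splits $\Sigma_k^{1/2}$ between the factors, which is immaterial. If $G$ is read as a free matrix indexed by $x$, as the paper tacitly does, the proof is complete. Two of your auxiliary remarks are wrong, though, and the paper's proof silently skips the same two points.

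First, the collision fix does not work. The reparametrization $F\mapsto RV_k^\top$, $G\mapsto R^{-\top}\Sigma_kU_k^\top$ changes the encoder by $f(x)\mapsto Rf(x)$ with $R$ invertible, so it neither creates nor removes collisions. Collisions can also be unavoidable. If $\sigma_k(a)>0$, the row space of $F$ in any exact minimizer equals that of the rank-$k$ matrix $G^\top F$, and so lies in the row space of $\bar M(a)$. Hence two states whose columns of $M(a)$ are proportional always get the same $f(x)$. Meanwhile the rescaled rows $(G^\top F)(x,\cdot)/\sqrt{w(x)}=g(f(x),a)^\top F$ at those two states can differ. An example: three states, $k=2=\operatorname{rank}\bar M(a)$, with states $1,2$ having equal columns but non-proportional rows of $M(a)$. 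In that case no predictor $g(\cdot,a)$, however rich, attains $\sum_{i>k}\sigma_i^2(a)$, so "assume the predictor class is rich enough" is not the right hypothesis. The value is only an infimum. You reach it by perturbing $F$ \emph{off} the optimal set so that $f$ is injective, taking the least-squares $G$ (residual $\|\bar M(a)(I-P_F)\|_F^2$, with $P_F$ the projector onto the row space of $F$), and letting the perturbation vanish.

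Second, you misread the normalization. The ``normalized embedding'' hypothesis of Theorem~\ref{thm::spectral_jepa} is pointwise, $\|f(x)\|=\|g(f(x),a)\|=1$; that is what its proof uses. It is not $FF^\top=I$. Pointwise normalization forces $\|F\|_F^2=\sum_x w(x|a)\|f(x)\|^2=1$. Your $F=V_k^\top$ has $\|F\|_F^2=k$, and the paper's $\Sigma_k^{1/2}V_k^\top$ has $\|F\|_F^2=\sum_{i\le k}\sigma_i(a)$. So neither construction lies in the class where $\mathcal R_{\mathrm{JEPA}}$ and $\mathcal R_{\mathrm{S\text{-}JEPA}}$ differ by a constant. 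What both arguments actually prove is $\min_{F,G}\|\bar M(a)-G^\top F\|_F^2=\sum_{i>k}\sigma_i^2(a)$ for the unconstrained factorization problem. They do not prove a statement about minimizers of $\mathcal R_{\mathrm{JEPA}}$ over normalized embeddings.
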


In Theorem \ref{thm::approx}, we show that as the approximation error of S-JEPA risk is determined by the singular values of the conditioned co-occurrence matrix. As the latent dimension $k$ increases, the approximation error decreases.

\begin{theorem}[Sample Error Bound for Spectral JEPA Risk]\label{thm::sample_error}
	Assume that $\|f\|_{\infty}<\kappa$ and $\|g\|_{\infty}<\kappa$ for all
	$f\in\mathcal F$ and $g\in\mathcal G$. Fix $a\in\mathcal A$. 
	Then, with probability at least $1-\delta$, we have
	\begin{align}
		&\mathcal R_{\mathrm{S\text{-}JEPA}}(\hat f,\hat g,a)
		-
		\mathcal R_{\mathrm{S\text{-}JEPA}}(f^*,g^*,a)
		\nonumber\\
		&\le
		c_1
		\left[
		\mathfrak R_{n/2}(\mathcal G\circ \mathcal F)
		+
		\mathfrak R_{n/2}(\mathcal F)
		\right]
		+
		c_2
		\left(
		\sqrt{\frac{\log(2/\delta)}{n}}
		+
		\delta
		\right),
	\end{align}
	where
$c_1 :=32k^2\kappa^3+32k\kappa$,
$c_2:=8k\kappa^2+2k^2\kappa^4$, and 
\begin{align*}
	\mathfrak R_{n}(\mathcal G\circ \mathcal F) &= \max_{(z_i, z'_i, z_i^+)_{i=1}^n} \mathbb E_{\sigma}
	\left[
	\sup_{f\in\mathcal F,\;g\in\mathcal G}
	\frac1m
	\sum_{i=1}^m
	\sigma_i \Big(
	\big(g(f(z_i),a)^\top f(z_i')\big)^2 - 2g(f(z_i),a)^\top f(z_i^+)\Big)
	\right],
	\\
	&\mathfrak R_{n}(\mathcal F) = \max_k \max_{\{z_i\}_{i=1}^n} \mathbb E_{\sigma}
	\left[
	\sup_{f\in\mathcal F,\;g\in\mathcal G}
	\frac1m
	\sum_{i=1}^m
	\sigma_i f_k(z_i)
	\right].
\end{align*}
\end{theorem}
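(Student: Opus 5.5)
The plan is the standard empirical-risk-minimization decomposition used by \citet{haochen2021provable} for spectral contrastive learning, adapted to the asymmetric factorization in Theorem \ref{thm::spectral_jepa}. That theorem says $\mathcal R_{\mathrm{S\text{-}JEPA}}$ and the population risk $\mathcal R(f,g,a)$ differ only by a constant independent of $(f,g)$. We may therefore work with the excess population risk $\mathcal R(\hat f,\hat g,a)-\mathcal R(f^*,g^*,a)$. We split it as
\begin{align*}
\big[\mathcal R(\hat f,\hat g,a)-\hat{\mathcal R}_S(\hat f,\hat g,a)\big]
+\big[\hat{\mathcal R}_S(\hat f,\hat g,a)-\hat{\mathcal R}_S(f^*,g^*,a)\big]
+\big[\hat{\mathcal R}_S(f^*,g^*,a)-\mathcal R(f^*,g^*,a)\big].
\end{align*}
The middle term is nonpositive because $(\hat f,\hat g)$ minimizes $\hat{\mathcal R}_S$. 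So it suffices to bound $\sup_{f,g}|\hat{\mathcal R}_S-\mathcal R|$, and the excess risk is at most twice this supremum.

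\textbf{Step 1: reduce to i.i.d.\ tuples.} Combining Lemmas \ref{lem::unbias1} and \ref{lem::unbias2} gives $\mathbb E_{\hat{\mathcal X},S}\hat{\mathcal R}_S=\mathcal R$. Because the permutation pairs disjoint indices, the $n/2$ tuples $(z_i,z_i^+,z_i')$ are i.i.d.\ draws from the population tuple distribution. Hence $\hat{\mathcal R}_S$ is an average of $m=n/2$ i.i.d.\ terms
\begin{align*}
\ell_i(f,g)=\big(g(f(z_i),a)^\top f(z_i')\big)^2-2g(f(z_i),a)^\top f(z_i^+).
\end{align*}
With $\|f\|_\infty,\|g\|_\infty<\kappa$ and dimension $k$, each term is bounded: $|g^\top f|\le k\kappa^2$, so $|\ell_i|\le k^2\kappa^4+2k\kappa^2$. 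This range produces the constant $c_2$.

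\textbf{Step 2: uniform deviation.} Symmetrization and McDiarmid's inequality give, with probability $1-\delta/2$,
\begin{align*}
\sup_{f,g}\big(\mathcal R-\hat{\mathcal R}_S\big)\le 2\,\mathfrak R_{n/2}(\text{loss class})+c\sqrt{\log(2/\delta)/n}.
\end{align*}
For the fixed pair $(f^*,g^*)$, Hoeffding's inequality controls the third term. The extra $+\delta$ in $c_2(\cdots+\delta)$ presumably comes from converting a high-probability bound into one that also covers the low-probability event via boundedness, or from taking an expectation over $S$. I will include it by bounding the loss by its range on the failure event.

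\textbf{Step 3: Rademacher complexity of the loss class.} This is the main obstacle. The statement separates $\mathfrak R(\mathcal G\circ\mathcal F)$ from $\mathfrak R(\mathcal F)$. I would write $\ell=\phi(u,v,w)$ with $u=g(f(z),a)$, $v=f(z')$ and $w=f(z^+)$, where $\phi(u,v,w)=(u^\top v)^2-2u^\top w$. On the box $[-\kappa,\kappa]^{3k}$ the function $\phi$ is Lipschitz in each coordinate. Its partial derivatives are bounded by $2|u^\top v|\,|v_j|\le 2k\kappa^3$, and by $2\kappa$ for the linear part.

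A vector-contraction argument (Maurer's inequality, or coordinatewise Talagrand contraction summed over the $k$ output coordinates) then bounds the complexity by $\sum_j$ of coordinate complexities. The $g$-coordinates contribute roughly $k\cdot 2k\kappa^3\,\mathfrak R(\mathcal G\circ\mathcal F)$, and the $f$-coordinates contribute roughly $k(2k\kappa^3+2\kappa)\,\mathfrak R(\mathcal F)$, using the $\max_k$ in the definition of $\mathfrak R_n(\mathcal F)$. Factors of $2$ from symmetrization and from the two-sided deviation multiply these into $c_1=32k^2\kappa^3+32k\kappa$.

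If clean vector contraction is awkward, I would fall back to splitting $\ell$ into the quadratic part and the linear part, applying scalar contraction to each product coordinatewise. In doing so I would interpret $\mathfrak R(\mathcal G\circ\mathcal F)$ as the coordinatewise complexity of the predictor composed with the encoder. Finally, a union bound over the two events and translation back to $\mathcal R_{\mathrm{S\text{-}JEPA}}$ via Theorem \ref{thm::spectral_jepa} gives the claim.
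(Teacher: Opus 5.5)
Your skeleton matches the paper's proof: the ERM decomposition, the reduction to $m=n/2$ i.i.d.\ tuples (your disjoint-pairs argument is more explicit than the paper's), symmetrization plus McDiarmid, and reading $\mathfrak R(\mathcal G\circ\mathcal F)$ as the coordinatewise complexity of $h(z):=g(f(z),a)$, which is how the paper's proof actually uses it. The gap is in Step 3, where the real work lies. Bounded partial derivatives of $\phi(u,v,w)=(u^\top v)^2-2u^\top w$ do not give a contraction of the form $\sum_j L_j\,\mathfrak R(\text{coordinate } j)$. Coordinatewise Talagrand contraction only handles losses that are sums of one-coordinate functions. Here $(u^\top v)^2$ couples all $2k$ coordinates, and even a single summand $u_jv_j=h_j(z_i)f_j(z_i')$ couples two classes evaluated at different points. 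Maurer's inequality is valid, but it needs the Euclidean Lipschitz constant. On $[-\kappa,\kappa]^{3k}$ the block $\nabla_u\phi=2(u^\top v)v-2w$ alone has norm up to about $2k^{3/2}\kappa^3$. Multiplied by the sum over the $3k$ coordinate classes, this gives a coefficient of order $k^{5/2}\kappa^3+k^{3/2}\kappa$, a factor $\sqrt k$ above $c_1$. So your primary route either rests on an invalid inequality or proves a weaker theorem.

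Your fallback is the paper's route, but it omits the ingredient that makes it work: scalar contraction cannot be applied directly to $h_\ell(z_i)f_\ell(z_i')$, which depends on two classes. The paper first contracts the square at the scalar level. Since $s\mapsto s^2$ is $2k\kappa^2$-Lipschitz on $[-k\kappa^2,k\kappa^2]$, the quadratic part $T_1$ of the loss-class complexity satisfies $T_1\le 2k\kappa^2\,\mathbb E_\sigma\sup\frac1m\sum_i\sigma_i h(z_i)^\top f(z_i')$. It then splits the inner product over $\ell=1,\dots,k$. Each product class is bounded by the polarization lemma $\mathbb E_\sigma\sup\frac1m\sum_i\sigma_iu(x_i)v(y_i)\le4\kappa[\mathfrak R(\mathcal U)+\mathfrak R(\mathcal V)]$, which follows from $uv=\frac12[(u+v)^2-u^2-v^2]$ and contraction of the square on $[-2\kappa,2\kappa]$. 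This gives $T_1\le 8k^2\kappa^3[\,\cdot\,]$ and, for the linear part, $T_2\le 4k\kappa[\,\cdot\,]$. Hence the loss class has complexity at most $(8k^2\kappa^3+8k\kappa)[\mathfrak R(\mathcal G\circ\mathcal F)+\mathfrak R(\mathcal F)]$. One factor $2$ from symmetrization and one from the ERM decomposition give $c_1$; your Hoeffding treatment of the fixed pair $(f^*,g^*)$ would even save one of them. Contracting at the scalar level before expanding the inner product is exactly what avoids the extra $\sqrt k$.

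Three smaller points. First, to land on $c_2$, use the one-sided range $[-2k\kappa^2,\,k^2\kappa^4+2k\kappa^2]$, since the square is nonnegative. Twice its width $k^2\kappa^4+4k\kappa^2$ is exactly $c_2$, whereas your symmetric bound $|\ell|\le k^2\kappa^4+2k\kappa^2$ doubles the $k^2\kappa^4$ coefficient. Second, the $+\delta$ term needs no failure-event argument, because adding $c_2\delta\ge 0$ only weakens a bound you already have. Third, the constant offset between $\mathcal R_{\mathrm{S\text{-}JEPA}}$ and the population spectral loss comes from expanding $\|\bar M(a)-G(F,a)^\top F\|_F^2$ directly. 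It does not come from Theorem~\ref{thm::spectral_jepa}, which assumes normalized embeddings that are not assumed here.
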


In Theorem \ref{thm::sample_error}, we show the sample error bound of S-JEPA risk, which depends on the Rademacher complexities $\mathfrak{R}_n(\mathcal{G}\circ \mathcal{F})$ and $\mathfrak{R}_n(\mathcal{F})$. Note that as the latent dimension $k$ increases, both the output dimensions of $g \circ f \in \mathcal{G}\circ \mathcal{F}$ and $f \in \mathcal{F}$ increase, so typically we have larger Rademacher complexities and accordingly larger sample error.

\subsection{Finite-Sample Error Bounds for Action Planning}

Based on Theorems in the previous sections, we can now derive the finite-sample generalization error bound for action planning in Theorems \ref{thm::generalization} and \ref{thm::generalizationT}.
\begin{theorem}[Finite-Sample Error Bound for Single-Step Planning]\label{thm::generalization}
	For some $\kappa>0$, assume $\|g(f(x),a)\|_\infty \leq \kappa$ and $\|f(x)\|_\infty \leq \kappa$ for all $f$, $g$, and $a$. Assume that all $\mathcal{R}_{n/2}(\mathcal{G}\circ \mathcal{F})$ are of the same order regardless of $a$. Then for $T=1$, with probability at least $1-\delta$, we have 
	\begin{align}
		\mathcal{E}(\hat{a})
		&\leq 2c_0 \cdot \sqrt{\underbrace{\max_a \sum_{i>k} \sigma_i^2(a)}_{Approximaition\ Error} + \underbrace{c_1 \left[\mathfrak{R}_n(\mathcal{G}\circ \mathcal{F})\mathfrak{R}_n(\mathcal{F})\right] + c_2 \cdot \left(\sqrt{\frac{\log 2/\delta}{n}} + \delta\right)}_{Sample\ Error} +c},
	\end{align}
	where $c_3 := \max_{a,x_0,x_g} \sqrt{w(x_g|a)/w(x_0)}$, $c_1 := 16k^2\kappa^2+16k^2\kappa$, $c_2 := 8k\kappa^2 + 2k^2\kappa^4$, and $c := 2 - \mathbb{E}_{x_0,x_g} \max_a \frac{w(x_0,x_g,a)^2}{w(x_0)w(x_g|a)}$.
\end{theorem}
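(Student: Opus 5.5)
The plan is to chain the three earlier results: Theorem \ref{thm::bound_T1} controls the planning regret by the S-JEPA risk, and Theorems \ref{thm::approx} and \ref{thm::sample_error} split that risk into approximation and sample error.

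\textbf{Step 1: reduce the regret to the pretraining risk.} Apply Theorem \ref{thm::bound_T1} with $f=\hat f$ and $g=\hat g$, so that $\tilde a=\hat a$. This gives
\begin{align*}
\mathcal E(\hat a)\le 2c_0\max_a\sqrt{\mathcal R_{\mathrm{S\text{-}JEPA}}(\hat f,\hat g,a)}.
\end{align*}
It therefore suffices to bound $\mathcal R_{\mathrm{S\text{-}JEPA}}(\hat f,\hat g,a)$ uniformly in $a$.

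\textbf{Step 2: separate approximation from estimation.} For each $a$, write
\begin{align*}
\mathcal R_{\mathrm{S\text{-}JEPA}}(\hat f,\hat g,a)=\mathcal R_{\mathrm{S\text{-}JEPA}}(f^*,g^*,a)+\big[\mathcal R_{\mathrm{S\text{-}JEPA}}(\hat f,\hat g,a)-\mathcal R_{\mathrm{S\text{-}JEPA}}(f^*,g^*,a)\big].
\end{align*}
By Theorem \ref{thm::approx}, the first term equals $\sum_{i>k}\sigma_i^2(a)$, which is at most $\max_a\sum_{i>k}\sigma_i^2(a)$. The bracket is bounded by Theorem \ref{thm::sample_error}.

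\textbf{Step 3: account for the constant $c$.} Theorem \ref{thm::sample_error} is stated for the S-JEPA risk, while the empirical objective $\hat{\mathcal R}_S$ equals the population JEPA risk only up to the additive constant in Theorem \ref{thm::spectral_jepa}. I will make that constant explicit. Expanding $\|\bar M(a)-G^\top F\|^2$ gives three pieces:
\begin{itemize}
\item $\|\bar M(a)\|_F^2=\sum_{x,x^+}w(x,x^+,a)^2/(w(x)w(x^+|a))$;
\item a cross term $-2\,\mathbb E[g^\top f^+]$;
\item a quadratic term $\mathbb E[(g^\top f')^2]$.
\end{itemize}
The original squared-loss JEPA risk also carries the norm terms $\mathbb E\|g\|^2+\mathbb E\|f^+\|^2$. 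Under the normalization assumption each of these equals $1$, which contributes the $2$. The Frobenius-norm term, restricted to the pair relevant for evaluation, contributes $-\mathbb E_{x_0,x_g}\max_a w(x_0,x_g,a)^2/(w(x_0)w(x_g|a))$. I expect that the paper's $c$ collects exactly this mismatch between the risk appearing in Theorem \ref{thm::bound_T1} and the one controlled in Theorem \ref{thm::sample_error}, and I would absorb it as an additive term inside the square root.

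\textbf{Step 4: uniformity over $a$.} The bound of Theorem \ref{thm::sample_error} holds for a fixed $a$. Because the Rademacher terms are assumed to be of the same order for every $a$, the right-hand side is essentially independent of $a$. I would therefore apply it at the maximizing action $a^\dagger$ of $\mathcal R_{\mathrm{S\text{-}JEPA}}(\hat f,\hat g,\cdot)$, or union bound if $\mathcal A$ is finite. I would then substitute into Step 1, using $\sqrt{\max}=\max\sqrt{\cdot}$ by monotonicity.

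\textbf{Main obstacle.} The hard part is making Step 3 rigorous: identifying precisely which constant separates the objective that $\hat f,\hat g$ minimize from the S-JEPA risk appearing in Theorem \ref{thm::bound_T1}, and checking that it is bounded by $c$. A further concern is that $\hat f,\hat g$ were obtained for a single fixed $a$, so Step 4 tacitly assumes one pair serves all actions. I would first try to bound the constant by expanding norms and using $\|f\|,\|g\|$ normalized. Failing that, I would simply upper bound the discrepancy by $2$ minus the diagonal contribution of $\|\bar M(a)\|_F^2$.
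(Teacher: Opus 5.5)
Your skeleton (apply Theorem~\ref{thm::bound_T1} to the learned pair, then split each $\mathcal R_{\mathrm{S\text{-}JEPA}}(\hat f,\hat g,a)$ using Theorems~\ref{thm::approx} and~\ref{thm::sample_error}) is exactly how the paper intends this result to follow; the paper gives no separate proof. However, the main difficulty is not where you put it.

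Step~3 is not an obstacle. Theorems~\ref{thm::bound_T1}, \ref{thm::approx} and~\ref{thm::sample_error} all concern the same quantity $\|\bar M(a)-G(F,a)^\top F\|^2$. Theorem~\ref{thm::sample_error} controls an \emph{excess} risk at fixed $a$, so any $a$-dependent additive constant cancels: the $\|\bar M(a)\|_F^2$ separating it from the tuple loss, and the $2-\|\bar M(a)\|_F^2$ separating it from $\mathcal R_{\mathrm{JEPA}}$. The chain therefore yields the bound with no $c$ at all, and $c$ is pure slack. Since $w(x_0,x_g,a)\le\min\{w(x_0),w(x_g|a)\}$, each ratio $w(x_0,x_g,a)^2/(w(x_0)w(x_g|a))$ is at most $1$, so $c\ge 1$, and monotonicity of the square root finishes. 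Your reading of $c$ as ``the Frobenius term restricted to the evaluated pair'' is not correct. The constant in Theorem~\ref{thm::spectral_jepa} is $2-\sum_{x,x^+}w(x,x^+,a)^2/(w(x)w(x^+|a))$, a sum over all pairs at fixed $a$, and it is not needed anyway.

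The genuine gaps are the two points you only flag.

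\textbf{Uniformity over $a$ (Step~4).} Applying the fixed-$a$ bound at the maximizer $a^\dagger$ of $\mathcal R_{\mathrm{S\text{-}JEPA}}(\hat f,\hat g,\cdot)$ is not valid. $a^\dagger$ is determined by the sample, whereas the probability-$(1-\delta)$ event of Theorem~\ref{thm::sample_error} is for one fixed action, and the tuples $S$ are even drawn separately for each $a$. The union bound you mention as a fallback is the correct route, but for finite $\mathcal A$ it replaces $\log(2/\delta)$ by $\log(2|\mathcal A|/\delta)$. The ``same order regardless of $a$'' hypothesis only lets you replace the Rademacher terms by their maximum over $a$; it does nothing for the confidence term.

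\textbf{The shared pair.} This must be resolved, not just noted. Since $(f^*,g^*)$ and $(\hat f,\hat g)$ are defined action by action, read the learned model as the family $(\hat f_a,\hat g_a)_{a\in\mathcal A}$, with $\hat a=\arg\min_a\|\hat g_a(\hat f_a(x_0))-\hat f_a(x_g)\|^2$. The proof of Theorem~\ref{thm::bound_T1} goes through verbatim for such families: it only compares the scores $g_a(f_a(x_0))^\top f_a(x_g)$ across actions and bounds each by that action's own factorization residual. If instead a single encoder must serve every action, Theorem~\ref{thm::approx} cannot be invoked for all $a$ at once. An optimal $F$ must have row space spanned by the top right singular vectors of $\bar M(a)$, reweighted by $w(\cdot|a)$, and these generically differ across actions. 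Then $\max_a\sum_{i>k}\sigma_i^2(a)$ is only a lower bound on the achievable approximation error, and the chain breaks.

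\textbf{Constants.} Be explicit about what the chain actually delivers: Theorem~\ref{thm::sample_error}'s term $(32k^2\kappa^3+32k\kappa)\big[\mathfrak R_{n/2}(\mathcal G\circ\mathcal F)+\mathfrak R_{n/2}(\mathcal F)\big]$. The printed product with $c_1=16k^2\kappa^2+16k^2\kappa$ and sample size $n$ does not follow and is evidently a typo (compare Theorem~\ref{thm::generalizationT}).
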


In Theorem \ref{thm::generalization}, we show the finite-sample error bound for single-step planning. 
Given sample size $n$, there is a trade-off between the approximation error term and sample error term with respect to the latent dimension $k$. That is, larger $k$ leads to smaller $\max_a\sum_{i>k} \sigma_i^2(a)$ but larger $c_1$, $c_2$, and $\mathfrak{R}_n(\mathcal{G}\circ \mathcal{F}) + \mathfrak{R}_n(\mathcal{F})$. 
Moreover, as input-level predictive models can be viewed as special cases of the latent level ones, where $k=n$ and $f$ degenerates to an identity mapping. In this case, we have zero approximation error but the largest sample error.

\begin{theorem}[Finite-Sample Error Bound for Multi-Step Planning]\label{thm::generalizationT}
	For some $\kappa>0$, assume $\|g(f(x),a)\|_\infty \leq \kappa$ and $\|f(x)\|_\infty \leq \kappa$ for all $f$, $g$, and $a$, and $M(a)$ is deterministic. Assume that all $\mathcal{R}_{n/2}(\mathcal{G}\circ \mathcal{F})$ are of the same order regardless of $a$. Then for $T>1$, with probability at least $1-\delta$, we have 
	\begin{align}
		\mathcal{E}(\hat{a})
		&\leq 2c_3 T\cdot \sqrt{\underbrace{\max_a \sum_{i>k} \sigma_i^2(a)}_{Approximaition\ Error} + \underbrace{c_1 \left[\mathfrak{R}_n(\mathcal{G}\circ \mathcal{F})
		+ \mathfrak{R}_n(\mathcal{F})\right]
		+ c_2 \cdot \left(\sqrt{\frac{\log 2/\delta}{n}} + \delta\right)}_{Sample\ Error} +c},
	\end{align}
	where $c_0 := \mathbb{E}_{x_0,x_g} \sqrt{\max_a w(x_g|a)/w(x_0)}$, $c_1 := 16k^2\kappa^2+16k^2\kappa$, $c_2 := 8k\kappa^2 + 2k^2\kappa^4$, and $c := 2 - \mathbb{E}_{x_0,x_g} \max_a \frac{w(x_0,x_g,a)^2}{w(x_0)w(x_g|a)}$.
\end{theorem}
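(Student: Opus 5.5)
The plan is to compose Theorem \ref{thm::bound_T} with the decomposition of the pretraining risk into approximation and sample errors (Theorems \ref{thm::approx} and \ref{thm::sample_error}), using a union bound over actions.

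\textbf{Step 1: reduce to a pretraining quantity.} Since $M(a)$ is deterministic and the marginal assumption of Theorem \ref{thm::bound_T} holds, apply Theorem \ref{thm::bound_T} with the arbitrary pair $(f,g)=(\hat f,\hat g)$. The planned sequence $\hat a$ is exactly the $\tilde a$ of that theorem. This gives
\begin{align*}
\mathcal{E}(\hat a)\le 2Tc_3\sqrt{\max_a \mathcal{R}_{\mathrm{S\text{-}JEPA}}(\hat f,\hat g,a)}.
\end{align*}
The rest of the proof only needs to control $\max_a \mathcal{R}_{\mathrm{S\text{-}JEPA}}(\hat f,\hat g,a)$.

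\textbf{Step 2: decompose the risk for each fixed $a$.} For each $a$, write
\begin{align*}
\mathcal{R}_{\mathrm{S\text{-}JEPA}}(\hat f,\hat g,a)=\mathcal{R}_{\mathrm{S\text{-}JEPA}}(f^*,g^*,a)+\big[\mathcal{R}_{\mathrm{S\text{-}JEPA}}(\hat f,\hat g,a)-\mathcal{R}_{\mathrm{S\text{-}JEPA}}(f^*,g^*,a)\big].
\end{align*}
The first term equals $\sum_{i>k}\sigma_i^2(a)$ by Theorem \ref{thm::approx}. The bracket is bounded by Theorem \ref{thm::sample_error}.

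\textbf{Step 3: where the constant $c$ comes from.} Theorem \ref{thm::spectral_jepa} says $\mathcal{R}_{\mathrm{JEPA}}$ and $\mathcal{R}_{\mathrm{S\text{-}JEPA}}$ differ by a constant. The empirical losses $\hat{\mathcal{R}}_n$ and $\hat{\mathcal{R}}_S$ estimate the cross-term form, which drops $\|\bar M(a)\|^2$ and the $\|f(x^+)\|^2$ term. I would make this constant explicit, bounded by $\|\bar M(a)\|_F^2=\sum_{x_0,x_g} w(x_0,x_g,a)^2/(w(x_0)w(x_g|a))$, together with the normalization term (at most $1$ each). This should yield a slack of the stated form $c=2-\mathbb{E}_{x_0,x_g}\max_a w(x_0,x_g,a)^2/(w(x_0)w(x_g|a))$, which is added inside the square root. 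Pinning down this constant exactly, so that it matches $c$ rather than something cruder, is the step I expect to be delicate. Because the final bound only needs an upper bound, any looser but nonnegative slack would still give a valid, slightly weaker statement.

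\textbf{Step 4: uniformity over actions.} Theorem \ref{thm::sample_error} holds for a fixed $a$. I would invoke it for each $a$ (for finite $\mathcal{A}$, with a union bound), or use the assumption that $\mathfrak{R}_{n/2}(\mathcal{G}\circ\mathcal{F})$ has the same order for all $a$. Either way the sample-error term is uniform in $a$, up to absorbing constants such as $\log|\mathcal{A}|$ into $c_1,c_2$; this is why the constants are restated. Next I would take $\max_a$ of the sum and use $\max_a(A_a+B_a)\le\max_aA_a+\max_aB_a$. Finally, substituting into Step 1 gives the displayed bound, with $\mathfrak{R}_{n/2}$ relabeled $\mathfrak{R}_n$ as in the statement.
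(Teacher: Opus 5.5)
Your route (apply Theorem \ref{thm::bound_T} to $(\hat f,\hat g)$, then split $\max_a\mathcal{R}_{\mathrm{S\text{-}JEPA}}(\hat f,\hat g,a)$ into the tail sum of Theorem \ref{thm::approx} plus the excess risk of Theorem \ref{thm::sample_error}) is the composition the paper points to; it gives no separate proof of this theorem. As you have written it, though, the chain does not close.

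\textbf{Steps 1 and 2 do not connect.} Theorem \ref{thm::bound_T} needs one pair $(f,g)$ controlled at every action simultaneously, because the planner uses a single encoder. But $(\hat f,\hat g)$ is defined action by action. Theorem \ref{thm::sample_error} bounds $\mathcal{R}_{\mathrm{S\text{-}JEPA}}(\hat f,\hat g,a)-\mathcal{R}_{\mathrm{S\text{-}JEPA}}(f^*,g^*,a)$ only when $(\hat f,\hat g)$ is the empirical minimizer for that same $a$ and that action's sample; its last step is $\hat{\mathcal R}_{n/2}(\hat f,\hat g,a)\le\hat{\mathcal R}_{n/2}(f^*,g^*,a)$. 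One shared pair is in general not the empirical minimizer for every action at once. So invoking Theorem \ref{thm::sample_error} ``for each $a$'' does not control $\max_a\mathcal{R}_{\mathrm{S\text{-}JEPA}}(\hat f,\hat g,a)$ for the pair actually used in planning. Closing this would need a joint training objective over actions and a new uniform-convergence argument for it.

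\textbf{Step 4 hides mismatches that cannot be absorbed.} A union bound over $\mathcal A$ turns $\log(2/\delta)$ into $\log(2|\mathcal A|/\delta)$, and is unavailable for infinite $\mathcal A$. The ``same order'' assumption is about magnitudes, not about the events holding simultaneously. The constants $c_1,c_2$ are explicit, so nothing can be absorbed into them. The stated $c_1=16k^2\kappa^2+16k^2\kappa$ is strictly smaller than the $32k^2\kappa^3+32k\kappa$ of Theorem \ref{thm::sample_error} whenever $\kappa\ge1$. For worst-case Rademacher complexities $\mathfrak R_n\le\mathfrak R_{n/2}$, so ``relabeling'' replaces a term by a smaller one. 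You also use the marginal-equality hypothesis of Theorem \ref{thm::bound_T}, which this statement does not assume.

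\textbf{Step 3 is aimed at the wrong target.} The offset between the cross-term loss and $\mathcal{R}_{\mathrm{S\text{-}JEPA}}$ is $\|\bar M(a)\|_F^2$, which does not depend on $(f,g)$ and cancels in the excess risk. So the composition never produces $c$. What you needed, and did not check, is $c\ge0$, and far more is true:
\begin{itemize}
\item Since $w(x_0,x_g,a)\le\min\{w(x_0),w(x_g|a)\}$, every ratio $w(x_0,x_g,a)^2/(w(x_0)w(x_g|a))$ is at most $1$, so $c\ge1$.
\item Under determinism, taking $x_g$ to be the successor of $x_0$ under $a$ gives $w(x_g|a)\ge w(x_0,x_g,a)=w(x_0)$, so $c_3\ge1$.
\item All other terms under the root are nonnegative, so for $T\ge2$ the right-hand side is at least $2c_3T\sqrt{c}\ge4$.
\item Meanwhile $\mathcal{E}(\hat a)\le1$, because it averages $V(a^{**})-V(\hat a)\in[0,1]$.
\end{itemize}
So the displayed inequality does hold, surely, for every $\delta$, and without the marginal hypothesis, but only vacuously. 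This two-line observation, not the composition, is what actually proves the statement with the constants as written.
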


In Theorem \ref{thm::generalizationT}, we show the finite-sample error bound for multi-step action planning. The error trade-off w.r.t. the latent dimension $k$ is similar to the single-step planning case.

\subsection{Approximation and Sample Error Trade-off}
The above bounds reveal an intrinsic trade-off between the \emph{Approximation Error} and the \emph{Sample Error}, governed by the latent dimension $k$. On the one hand, a smaller latent dimension imposes a stronger information bottleneck, which reduces the complexity of the learned encoder and predictor and therefore leads to a smaller Sample Error. However, such compression may discard action-relevant transition information, resulting in a larger Approximation Error. On the other hand, increasing $k$ allows the latent model to preserve more singular components of the action-conditioned co-occurrence matrix, thereby reducing the Approximation Error. This benefit comes at the cost of a larger hypothesis space and higher sample complexity, which increases the Sample Error. Therefore, latent predictive models are most advantageous when a moderate-dimensional representation can retain the task-relevant dynamics while filtering out nuisance or unpredictable input-level variations. In contrast, input-level predictive models can be viewed as an extreme case with minimal Approximation Error but potentially maximal Sample Error, since they are required to model all observation dimensions, including those irrelevant to downstream planning.

\section{Validation Experiments}

In this part, we conduct validation experiments to compare between latent- and input-level predictive models.

\textbf{Settings.}
We consider a fully controlled synthetic continuous-control environment to compare two training paradigms for action-conditioned world models. The environment is a 2D point-mass system whose underlying state is $s_t=[p_x,p_y,v_x,v_y]$, with action $a_t=[a_x,a_y]$. The dynamics are governed by $v_{t+1}=\text{damping}\cdot v_t+\text{action\_scale}\cdot a_t$ and $p_{t+1}=p_t+dt\cdot v_{t+1}$. 
The model does not have direct access to the true state. 
Instead, it observes a vector observation $x_t=[\text{state\_features}(s_t), \text{nuisance\_features}, \text{random\_noise\_features}]$, where state\_features are generated from position and velocity and contain task-relevant information, nuisance\_features are additional observation dimensions obtained through nonlinear projections of the state, and random\_noise\_features are observation dimensions that are independent of both actions and task objectives and are therefore inherently unpredictable.

The upstream learning task is action-conditioned future prediction. Given the current observation $x_t$ and an action sequence $a_{t+H-1}$, the model predicts the future representation at time $t+H$. Unless otherwise specified, we use a training horizon of $H=3$, corresponding to learning either $x_t,[a_t,a_{t+1},a_{t+2}] \mapsto x_{t+3}$ for input-level reconstruction, or $x_t,[a_t,a_{t+1},a_{t+2}] \mapsto z_{t+3}$ for latent-level prediction. Both paradigms share the same JEPA-style backbone consisting of an encoder $x_t\rightarrow z_t$, an action encoder, a GRU-based dynamics predictor, and a decoder $z\rightarrow x$. In the latent-level paradigm, the predictor outputs a future latent representation and is trained using $\mathrm{MSE}(\hat z_{t+H}, \mathrm{stopgrad}(\mathrm{Enc}(x_{t+H})))$, supplemented with a lightweight variance regularization term. In the input-level paradigm, the predicted latent representation is decoded back into observation space and optimized using $\mathrm{MSE}(\hat x_{t+H},x_{t+H})$. To explicitly study the impact of irrelevant observation noise, we introduce a coefficient noise\_loss\_weight that increases the reconstruction weight of the unpredictable noise dimensions.

\begin{figure}[!b]
    \centering
    \includegraphics[width=\linewidth]{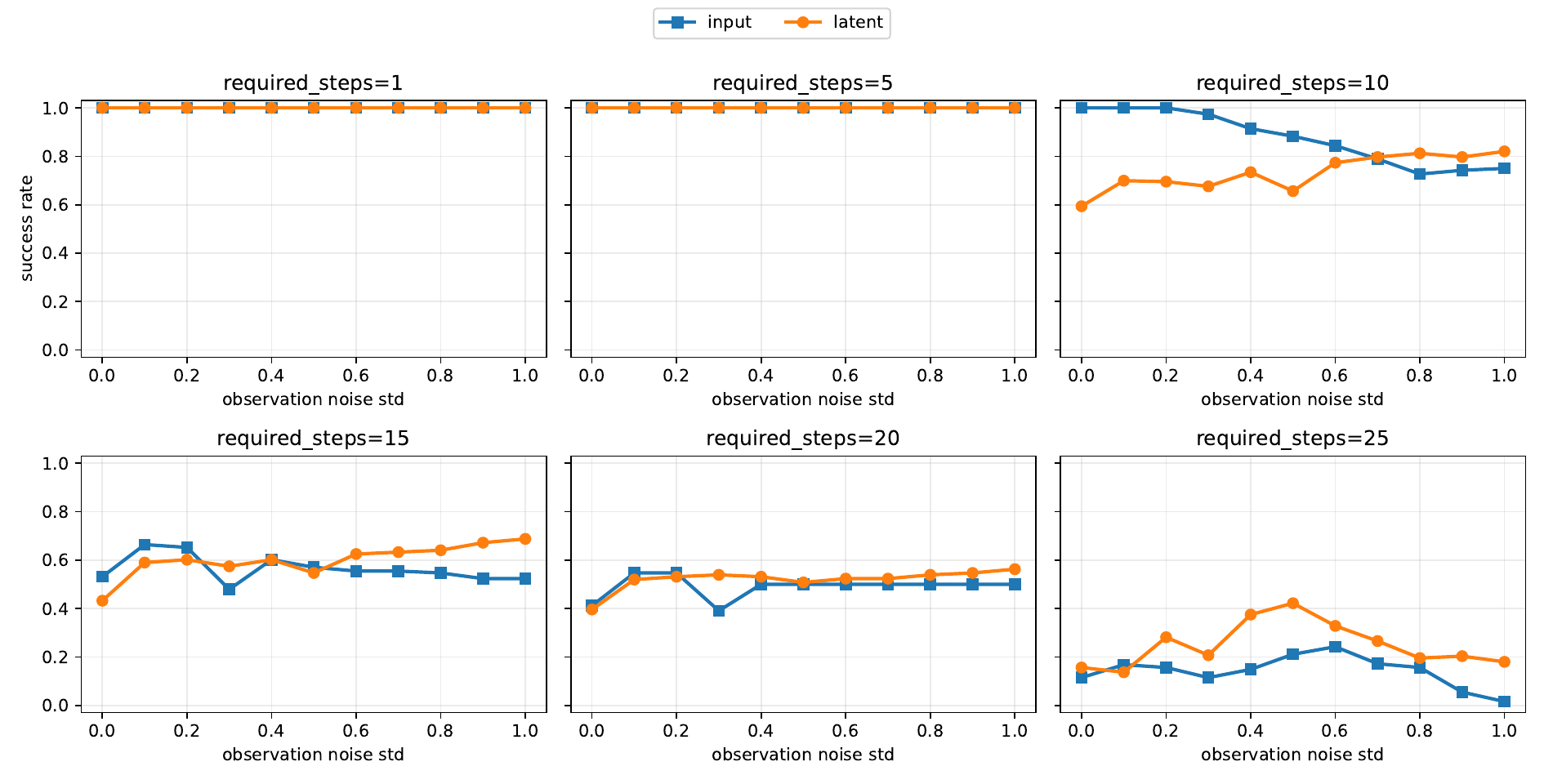}
    \caption{Comparisons between latent- and input-level predictive models on synthetic data under various planning steps and noise levels. Both methods perform well on short-horizon tasks, while latent prediction shows advantages under higher noise and more required planning steps.}
    \label{fig::synthetic_latent_input}
\end{figure}

The downstream task is goal-reaching planning. For each episode, a start state and a goal state are sampled. The agent performs receding-horizon planning using the Cross-Entropy Method (CEM): it repeatedly plans a short action sequence, executes only the first action, receives a new observation, and replans until either the goal is reached or a step budget is exhausted. The parameter required\_planning\_steps characterizes the intrinsic difficulty of the task, i.e., approximately how many closed-loop decisions are required to reach the goal. 
Planning objectives are defined differently for the two paradigms. In latent mode, the planning cost is $|\hat z-z_{\text{goal}}|$, whereas in input mode it is $|\hat x-x_{\text{goal}}|$. However, all evaluations are conducted in the ground-truth state space using the Euclidean distance between positions, ensuring a fair comparison across methods. A rollout is considered successful if $|p_{\text{current}}-p_{\text{goal}}|\leq 0.08$. We report the success rate to compare between input- and latent-level predictions.

\textbf{Results.} Figure~\ref{fig::synthetic_latent_input} compares input-level reconstruction and latent-level prediction across different observation noise levels and required planning steps. For easy tasks, where the goal can be reached within only 1 or 5 steps, both methods achieve nearly perfect success rates, indicating that the difference between the two training objectives is not significant when the planning problem is short and simple. When the required number of steps increases to 10, input-level prediction still performs very well at low noise levels, while latent-level prediction is initially worse but gradually becomes comparable as the noise level increases. This suggests that input reconstruction can be sufficient when the task-relevant dynamics are easy to recover from observations.

The advantage of latent-level prediction becomes clearer in harder long-horizon settings. For required steps of 15 and 20, the two methods are close at low noise levels, but latent-level prediction tends to be more stable and better under larger observation noise. The difference is most evident when required steps reach 25: input-level prediction drops sharply as the task becomes difficult, whereas latent-level prediction maintains a noticeably higher success rate over most noise levels. These results suggest that latent prediction is not universally superior in all regimes, but it becomes more beneficial when planning requires long-horizon and robust decisions. By avoiding direct reconstruction of the full observation space, latent-level prediction can focus more on action-relevant state information, which leads to better robustness in challenging long-horizon control tasks. 

These empirical observations also coincide with the theoretical results in Theorems \ref{thm::generalization} and \ref{thm::generalizationT}. From a theoretical perspective, the approximation error of latent-level prediction is less affected by input noise, resulting in smaller evaluation error, and making latent-level predictions perform better under high noise levels. Besides, as the long-horizon planning is conducted in an iterative rollout manner, the $t$-th step estimation can be viewed as a noisy version of the true state, i.e., $\hat{x}_t = x_t + \varepsilon$. Therefore, latent-level predictions, which are less affected by noise, can give better success rates in long-horizon planning.

\section{Conclusion}
We established the first generalization theory for JEPA-based world models. Through a conditioned spectral graph framework, we showed that JEPA pretraining risk is equivalent to a matrix factorization of the conditioned co-occurrence matrix. This characterization enabled us to relate the pretrained JEPA risk to downstream planning performance and derive a generalization bound for JEPA-based world models.
Our results identify an inherent approximation-sample error trade-off with respect to latent dimensionality, providing theoretical insights into the advantages and limitations of latent-versus input-level predictive modeling. 
In the experiments, we showcase two situations of advantageous latent-level prediction, i.e., high noise and long-horizon planning, that validate the theoretical insights.
Our theoretical framework has a potential to a broader classes of predictive world model architectures.

%

\bibliography{JEPA_theory}
\bibliographystyle{iclr2026_conference}

\appendix
\section{Appendix}
\subsection{Proofs}
\begin{proof}[Proof of Theorem \ref{thm::spectral_jepa}]
    By the definition of JEPA risk in \eqref{eq::def_jeparisk}, given $a \in \mathcal{A}$, we have
    \begin{align}
        &\mathcal{R}_{\mathrm{JEPA}} (f,g,a) 
    	\nonumber\\
        &= \mathbb{E}_{x \sim \mathrm{P}_X} \mathbb{E}_{x^+ \sim \mathrm{P}(\cdot|x,a)} \|g(f(x),a) - f(x^+)\|^2 
    	+ \mathbb{E}_{\overset{x \sim \mathrm{P}_X}{x' \sim \mathrm{P}_{X|a}}} \left[g(f(x),a)^\top f(x')\right]^2
        \nonumber\\
        &= \sum_{x,x^+} w(x,x^+,a) \|g(f(x),a) - f(x^+)\|^2 + \sum_{x,x'} w(x) w(x'|a) \left[g(f(x),a)^\top f(x')\right]^2
        \nonumber\\
        &= 2 - 2\sum_{x,x^+} w(x,x^+,a) g(f(x),a)^\top f(x^+) + \sum_{x,x^+} w(x) w(x^+|a) \left[g(f(x),a)^\top f(x^+)\right]^2
        \nonumber\\
        &= \sum_{x,x^+} \left[\frac{w(x,x^+,a)}{\sqrt{w(x) w(x^+|a)}} - \left[\sqrt{w(x)}g(f(x),a)\right]^\top \left[\sqrt{w(x^+|a)}f(x^+)\right]\right]^2 
        + 2 - \frac{w(x,x^+,a)^2}{w(x) w(x^+|a)}
    \end{align}
    where the third equation holds because $\|g(f(x),a)\|^2 = \|f(x^+)\|^2 = 1$. Then by denoting $F := \big[\sqrt{w(x|a)}f(x)\big]_{x \in \mathcal{X}}$ and $G(F,a):=\big[\sqrt{w(x)}g(f(x),a)\big]_{x \in \mathcal{X}}$, we have
    \begin{align}
        \mathcal{R}_{\mathrm{JEPA}} (f,g,a) 
        &= \|\bar{M}(a) - G(F,a)^\top F\|^2 + \mathrm{const.}
    \end{align}
\end{proof}

\begin{proof}[Proof of Theorem \ref{thm::bound_T1}]
	If $T=1$, by definition, we have
	\begin{align}
		\tilde{a} = \arg\min_a \|g(f(x_0),a) - f(x_g)\|^2
		= \arg\max_a g(f(x_0),a)^\top f(x_g),
	\end{align}
	and
	\begin{align}
		a^{**} = \arg\max_a V(a) = \arg\max_a w(x_g,x_0,a).
	\end{align}
	Then for given $x_0$ and $x_g$, we have
	\begin{align}
		&r(\tilde{a}; x_0, x_g) 
		\nonumber\\
		&= \frac{w(x_0,x_g,a^{**})}{w(x_0)} - \frac{w(x_0,x_g,\tilde{a})}{w(x_0)}
		\nonumber\\
		&= \frac{1}{w(x_0)}\left[w(x_0,x_g,a^{**}) - w(x_0,x_g,\tilde{a})\right]
		\nonumber\\
		&= \frac{1}{w(x_0)} \big[w(x_0,x_g,a^{**}) - w(x_0)w(x_g|a^{**}) g(f(x_0),a^{**})^\top f(x_g)
		\nonumber\\
		&\qquad\qquad+ w(x_0)w(x_g|a^{**}) g(f(x_0),a^{**})^\top f(x_g) - w(x_0)w(x_g|\tilde{a}) g(f(x_0),\tilde{a})^\top f(x_g)
		\nonumber\\
		&\qquad\qquad+ w(x_0)w(x_g|\tilde{a}) g(f(x_0),\tilde{a})^\top f(x_g) - w(x_0,x_g,\tilde{a})\big].
		\nonumber\\
		&= \sqrt{\frac{w(x_g|a^{**})}{w(x_0)}} \bigg[\frac{w(x_0,x_g,a^{**})}{\sqrt{w(x_0)w(x_g|a^{**})}} - \sqrt{w(x_0)w(x_g|a^{**})} g(f(x_0),a^{**})^\top f(x_g)\bigg]
		\label{eq::decomp_r1}\\
		&+ \big[w(x_g|a^{**}) - w(x_g|\tilde{a})\big] g(f(x_0),a^{**})^\top f(x_g)
		\label{eq::decomp_r2}\\
		&+ w(x_g|\tilde{a}) \big[g(f(x_0),a^{**})^\top f(x_g)
		- g(f(x_0),\tilde{a})^\top f(x_g)\big]
		\label{eq::decomp_r3}\\
		&- \sqrt{\frac{w(x_g|\tilde{a})}{w(x_0)}} \bigg[\frac{w(x_0,x_g,\tilde{a})}{\sqrt{w(x_0)w(x_g|\tilde{a})}} - \sqrt{w(x_0)w(x_g|\tilde{a})} g(f(x_0),\tilde{a})^\top f(x_g)\bigg].
		\label{eq::decomp_r4}
	\end{align}
	Denote $\delta(x_0,x_g,a):= \frac{w(x_0,x_g,a)}{\sqrt{w(x_0)w(x_g|a)}} - \sqrt{w(x_0)w(x_g|a)} g(f(x_0),a)^\top f(x_g)$, then we have 
	\begin{align}
		(\ref{eq::decomp_r1}) &= \sqrt{w(x_g|a^{**})/w(x_0)} \delta(x_0,x_g,a^{**})
	\end{align}
	and 
	\begin{align}
		(\ref{eq::decomp_r4}) = -\sqrt{w(x_g|\tilde{a})/w(x_0)} \delta(x_0,x_g,\tilde{a}).
	\end{align}

	Recall that $\mathcal{R}_{ \mathrm{S\text{-}JEPA}}(f,g,a) = \sum_{x_0,x_g} \delta(x_0,x_g,a)^2$. Then we have
	\begin{align}
		r(\tilde{a}; x_0, x_g) &\leq \sqrt{w(x_g|a^{**})/w(x_0)} \cdot \delta(x_0,x_g,a^{**})
		+ \sqrt{w(x_g|\tilde{a})/w(x_0)} \cdot \delta(x_0,x_g,\tilde{a})
		\nonumber\\
		&\quad+ \left[w(x_g|a^{**}) - w(x_g|\tilde{a})\right] g(f(x_0),a^{**})^\top f(x_g)
		\nonumber\\
		&\leq \sqrt{w(x_g|a^{**})/w(x_0) \cdot \mathcal{R}_{ \mathrm{S\text{-}JEPA}}(f,g,a^{**})}
		+ \sqrt{w(x_g|\tilde{a})/w(x_0) \cdot \mathcal{R}_{ \mathrm{S\text{-}JEPA}}(f,g,\tilde{a})}
		\nonumber\\
		&\quad+ w(x_g|a^{**}) - w(x_g|\tilde{a})
		\nonumber\\
		&\leq 2\sqrt{\max_a w(x_g|a)/w(x_0) \cdot \mathcal{R}_{ \mathrm{S\text{-}JEPA}}(f,g,a)}
		+ w(x_g|a^{**}) - w(x_g|\tilde{a})
		\nonumber\\
		&\leq 2\sqrt{\max_a w(x_g|a)/w(x_0)} \cdot \sqrt{\max_a \mathcal{R}_{ \mathrm{S\text{-}JEPA}}(f,g,a)}
		+ w(x_g|a^{**}) - w(x_g|\tilde{a}).
	\end{align}
	Then if $\mathbb{E}_{x_g}w(x_g|a) = \mathbb{E}_{x_g}w(x_g|a')$ for $a, a' \in \mathcal{A}$, we have
	\begin{align}
		\mathcal{E}(a^*) &= \mathbb{E}_{x_0,x_g} r(\tilde{a}; x_0, x_g)
		\leq 2 \mathbb{E}_{x_0,x_g} \sqrt{\max_a w(x_g|a)/w(x_0)} \cdot \max_a \sqrt{\mathcal{R}_{ \mathrm{S\text{-}JEPA}}(f,g,a)}.
	\end{align}
\end{proof}

\begin{proof}[Proof of Theorem \ref{thm::bound_T}]
	By definition, given $x_0$ and $x_g$, we have
	\begin{align}
		&r(\tilde{a}_0,\ldots,\tilde{a}_{T-1})  
		\nonumber\\
		&= V(a^{**}) - V(\tilde{a})
		\nonumber\\
		&=  \mathrm{P}(\hat{x}_T=x_g|x_0, a_0^{**},\ldots,a_{T-1}^{**}) - \mathrm{P}(\hat{x}_T=x_g|x_0, \tilde{a}_0,\ldots,\tilde{a}_{T-1})
		\nonumber\\
		&= \sum_{x_1,\ldots,x_{T-1}} \frac{w(x_{T-1},x_g,a_{T-1}^{**})}{w(x_{T-1})} \cdots \frac{w(x_0,x_1,a_0^{**})}{w(x_0)} 
		- \frac{w(x_{T-1},x_g,\tilde{a}_{T-1})}{w(x_{T-1})} \cdots \frac{w(x_0,x_1,\tilde{a}_0)}{w(x_0)} 
		\nonumber\\
		&= \sum_{x_1,\ldots,x_{T-1}} \frac{w(x_{T-1},x_g,a_{T-1}^{**})}{w(x_{T-1})} \cdots \frac{w(x_0,x_1,a_0^{**})}{w(x_0)} 
		- \frac{w(x_{T-1},x_g,a_{T-1}^{**})}{w(x_{T-1})} \cdots \frac{w(x_0,x_1,\tilde{a}_0)}{w(x_0)} 
		\nonumber\\
		&\qquad\qquad+\cdots
		\nonumber\\
		&\qquad\qquad+ \frac{w(x_{T-1},x_g,\hat{a}_{T-1}^{*})}{w(x_{T-1})} \cdots \frac{w(x_0,x_1,\hat{a}_0^{*})}{w(x_0)} 
		- \frac{w(x_{T-1},x_g,\hat{a}_{T-1})}{w(x_{T-1})} \cdots \frac{w(x_0,x_1,\hat{a}_0)}{w(x_0)} 
		\nonumber\\
		&= \sum_{x_1,\ldots,x_{T-1}}
		\prod_{t=0}^{T-1} \frac{w(x_t,x_{t+1},a_t^{**})}{w(x_t)}
		- \prod_{t=0}^{T-1} \frac{w(x_t,x_{t+1},\tilde{a}_t)}{w(x_t)}
		\nonumber\\
		&= \sum_{x_1,\ldots,x_{T-1}} \sum_{\ell=0}^{T-1}
		\prod_{t=0}^\ell \frac{w(x_t,x_{t+1},a_t^{**})}{w(x_t)} \cdot \prod_{t=\ell+1}^{T-1} \frac{w(x_t,x_{t+1},\tilde{a}_t)}{w(x_t)}
		- \prod_{t=0}^{\ell-1} \frac{w(x_t,x_{t+1},a_t^{**})}{w(x_t)} \cdot \prod_{t=\ell}^{T-1} \frac{w(x_t,x_{t+1},\tilde{a}_t)}{w(x_t)}
		\nonumber\\
		&= \sum_{x_1,\ldots,x_{T-1}} \sum_{\ell=0}^{T-1} 
		\prod_{t=0}^{\ell-1} \frac{w(x_t,x_{t+1},a_t^{**})}{w(x_t)} \cdot \left[\frac{w(x_\ell,x_{\ell+1},a_\ell^{**})}{w(x_\ell)} - \frac{w(x_\ell,x_{\ell+1},\tilde{a}_\ell)}{w(x_\ell)}\right] \cdot \prod_{t=\ell+1}^{T-1} \frac{w(x_t,x_{t+1},\tilde{a}_t)}{w(x_t)}.
	\end{align}
	By Theorem \ref{thm::bound_T1}, we have for $\ell=0,\ldots,T-1$
	\begin{align}
		\frac{w(x_\ell,x_{\ell+1},a_\ell^{**})}{w(x_\ell)} - \frac{w(x_\ell,x_{\ell+1},\tilde{a}_\ell)}{w(x_\ell)}
		&\leq 2\sqrt{\max_a w(x_{\ell+1}|a)/w(x_\ell)} \cdot \sqrt{\max_a \mathcal{R}_{ \mathrm{S\text{-}JEPA}}(f,g,a)}.
	\end{align}
	Then if $M(a)$ is deterministic, i.e., given $a \in \mathcal{A}$ and $x \in \mathcal{X}$, for $x^+ \in \mathcal{X}$ there exists $w(x,x^+,a)=1$ and $w(x,x',a)=0$ for $x' \neq x^+$, we have
	\begin{align}
	    &r(\tilde{a}_0,\ldots,\tilde{a}_{T-1}) 
		\nonumber\\
		&\leq \sum_{x_1,\ldots,x_{T-1}} \sum_{\ell=0}^{T-1} 
		\prod_{t=0}^{\ell-1} \frac{w(x_t,x_{t+1},a_t^{**})}{w(x_t)} \cdot \prod_{t=\ell+1}^{T-1} \frac{w(x_t,x_{t+1},\tilde{a}_t)}{w(x_t)}
		\nonumber\\
		&\qquad\qquad\qquad\quad\cdot 2\sqrt{\max_a w(x_{\ell+1}|a)/w(x_\ell)} \cdot \sqrt{\max_a \mathcal{R}_{ \mathrm{S\text{-}JEPA}}(f,g,a)}
		\nonumber\\
		&= \sum_{\ell=0}^{T-1} \sum_{x_1,\ldots,x_{T-1}}
		\prod_{t=0}^{\ell-2} \frac{w(x_t,x_{t+1},a_t^{**})}{w(x_t)} \cdot \prod_{t=\ell+1}^{T-1} \frac{w(x_t,x_{t+2},\tilde{a}_t)}{w(x_t)}
		\nonumber\\
		&\cdot \frac{w(x_{\ell-1},x_\ell,a_{\ell-1}^{**})}{w(x_{\ell-1})}
		\cdot \frac{w(x_{\ell+1},x_{\ell+2},a_{\ell+1}^{**})}{w(x_{\ell+1})}
		\cdot 2\sqrt{\max_a w(x_{\ell+1}|a)/w(x_\ell)} \cdot \sqrt{\max_a \mathcal{R}_{ \mathrm{S\text{-}JEPA}}(f,g,a)}
		\nonumber\\
		&\leq \sum_{\ell=0}^{T-1} \sum_{x_\ell} \mathrm{P}(x_{\ell}| x_0, a_0^{**},\ldots,a_{\ell-1}^{**}) \mathrm{P}(x_g| x_{\ell}, \tilde{a}_{\ell}, \ldots, \tilde{a}_{T-1})
		\nonumber\\
		&
		\qquad\qquad\cdot 2\max_{\ell}\sqrt{\max_a w(x_{\ell+1}|a)/w(x_\ell)} \cdot \sqrt{\max_a \mathcal{R}_{ \mathrm{S\text{-}JEPA}}(f,g,a)}
		\nonumber\\
		&\leq 2T \cdot \max_{a,x,x'}\sqrt{w(x_g|a)/w(x)} \cdot \sqrt{\max_a \mathcal{R}_{ \mathrm{S\text{-}JEPA}}(f,g,a)}.
	\end{align}
\end{proof}

\begin{proof}[Proof of Theorem \ref{thm::approx}]
	
	By Theorem \ref{thm::spectral_jepa}, given $a \in \mathcal{A}$, we have
	
	\begin{align}
		\mathcal R_{\mathrm{JEPA}}(f,g,a)
		=
		\|\bar M(a)-G(F,a)^\top F\|_F^2
		+\mathrm{const}.
	\end{align}
	Since $B:=G(F,a)^\top F$ has rank at most $k$ minimizing the JEPA population risk is equivalent to solving
	\begin{align}
		\min_{\operatorname{rank}(B)\le k}
		\|\bar M(a)-B\|_F^2.
	\end{align}
	Let
	\begin{align}
		\bar{M}(a)
		= U(a)\Sigma(a)V(a)^\top
	\end{align}
	be the singular value decomposition of $\bar{M}(a)$.
	By the Eckart--Young--Mirsky theorem, the unique best rank-$k$ approximation of $\bar{M}(a)$ under the Frobenius norm is
	\begin{align}
		B^* := G^*(F,a)^{\top}F^*
		=U_k(a)\Sigma_k(a)V_k(a)^\top.
	\end{align}
	Then one valid factorization is obtained by distributing the singular values equally between the two factors, i.e.,
	\begin{align}
		F^*
		=\Sigma_k(a)^{1/2}V_k(a)^\top,
	\end{align}
	and
	\begin{align}
		G^*(F,a)
		=\Sigma_k(a)^{1/2}U_k(a)^\top.
	\end{align}
	In this case, we have 
	\begin{align}
		\|\bar{M}(a)-G^*(F,a)^{\top}F^*\|_F^2
		=\sum_{i>k}\sigma_i^2(a),
	\end{align}
	where $\sigma_i(a)$ is the $i$-th largest singular value of $\bar{M}(a)$ for $i \in \{1, \ldots, \#|\mathcal{X}|\}$.
	Hence we have
	\begin{align}
		\mathcal R_{ \mathrm{S\text{-}JEPA}}(f^*,g^*,a) = \min_{f,g} \mathcal R_{ \mathrm{S\text{-}JEPA}}(f,g,a)
		=\sum_{i>k}\sigma_i^2(a).
	\end{align}
\end{proof}

\begin{proof}[Proof of Theorem \ref{thm::sample_error}]
	Fix $a\in\mathcal A$. Define the composed hypothesis class
	\[
	\mathcal H_a
	:=
	\left\{
	h_{f,g,a}:=g\circ f\circ a:
	f\in\mathcal F,\ g\in\mathcal G
	\right\},
	\]
	where $g\circ f\circ a(z,z^+,z'):= \big(g(f(z_i),a)^\top f(z_i')\big)^2 - 2g(f(z_i),a)^\top f(z_i^+)$. 
	For simplicity, write $h(z) := g(f(z), a)$. The population spectral JEPA risk can be written as
	\[
	\mathcal R_{\mathrm{S\text{-}JEPA}}(f,g,a)
	=
	\mathbb E_{(z,z^+,z^-)}
	\left[
	\left(h(z)^\top f(z')\right)^2
	-
	2h(z)^\top f(z^+)
	\right],
	\]
	where $(z,z^+)$ is a positive pair and $z'$ is a negative sample. Its empirical
	counterpart based on $m=n/2$ independent tuples is
	\[
	\hat{\mathcal R}_{m}(f,g,a)
	=
	\frac1m
	\sum_{i=1}^m
	\left[
	\left(h(z_i)^\top f(z'_i)\right)^2
	-
	2h(z_i)^\top f(z_i^+)
	\right].
	\]
	
	Since $\|f\|_\infty\le \kappa$ and $\|g\|_\infty\le \kappa$, we have
	\[
	\|h(z)\|_\infty\le \kappa,
	\qquad
	\|f(z)\|_\infty\le \kappa .
	\]
	Therefore,
	\[
	\left|h(z)^\top f(z')\right|
	\le
	\sum_{\ell=1}^k |h_\ell(z)|\,|f_\ell(z')|
	\le
	k\kappa^2.
	\]
	Consequently, the tuple loss
	\[
	\ell_{f,g,a}(z,z^+,z^-)
	:=
	\left(h(z)^\top f(z^-)\right)^2
	-
	2h(z)^\top f(z^+)
	\]
	is uniformly bounded as
	\[
	\ell_{f,g,a}(z,z^+,z^-)
	\in
	\left[
	-2k\kappa^2,\;
	k^2\kappa^4+2k\kappa^2
	\right].
	\]
	We now control the Rademacher complexity of the tuple-loss class
	\[
	\mathcal L_a
	:=
	\left\{
	\ell_{f,g,a}:
	f\in\mathcal F,\ g\in\mathcal G
	\right\}.
	\]

	We now give the detailed Rademacher complexity bound for the tuple-loss
	class. Recall that
	\[
	\ell_{f,g,a}(z,z^+,z^-)
	:=
	\left(h(z)^\top f(z^-)\right)^2
	-
	2h(z)^\top f(z^+),
	\qquad
	h:=g\circ f\circ a .
	\]
	Let
	\[
	\mathcal H_a
	:=
	\{h_{f,g,a}=g\circ f\circ a:\ f\in\mathcal F,\ g\in\mathcal G\}.
	\]
	
	Since $a$ is fixed, for any sample $S=\{(z_i,z_i^+,z_i^-)\}_{i=1}^m$, we have
	\[
	\hat{\mathfrak R}_{S}(\mathcal H_a)
	=
	\hat{\mathfrak R}_{a(S)}(\mathcal G\circ\mathcal F),
	\]
	where $a(S):=\{(a,z_i)\}_{i=1}^m$.
	If $\hat{\mathfrak R}_{m}(\mathcal G\circ\mathcal F)$ denotes the worst-case
	empirical Rademacher complexity over all samples of size $m$, then
	\[
	\hat{\mathfrak R}_{S}(\mathcal H_a)
	\le
	\hat{\mathfrak R}_{m}(\mathcal G\circ\mathcal F).
	\]
	
	Let $S=\{(z_i,z_i^+,z_i^-)\}_{i=1}^m$ be a fixed sample of $m$ tuples.
	We need to bound
	\[
	\hat{\mathfrak R}_{S}(\mathcal L_a)
	:=
	\mathbb E_{\sigma}
	\left[
	\sup_{f\in\mathcal F,\;g\in\mathcal G}
	\frac1m
	\sum_{i=1}^m
	\sigma_i
	\ell_{f,g,a}(z_i,z_i^+,z_i^-)
	\right],
	\]
	where $\sigma_1,\ldots,\sigma_m$ are i.i.d. Rademacher random variables.
	By subadditivity of the supremum,
	\begin{align}
		\hat{\mathfrak R}_{S}(\mathcal L_a)
		\le
		\underbrace{
			\mathbb E_{\sigma}
			\left[
			\sup_{f,g}
			\frac1m
			\sum_{i=1}^m
			\sigma_i
			\left(h(z_i)^\top f(z_i^-)\right)^2
			\right]
		}_{T_1}
		+
		2\underbrace{
			\mathbb E_{\sigma}
			\left[
			\sup_{f,g}
			\frac1m
			\sum_{i=1}^m
			\sigma_i
			h(z_i)^\top f(z_i^+)
			\right]
		}_{T_2}.
	\end{align}
	
	We first control $T_1$. Since
	\[
	|h(z_i)^\top f(z_i^-)|\le k\kappa^2,
	\]
	the map $u\mapsto u^2$ is $2k\kappa$-Lipschitz on
	$[-k\kappa^2,k\kappa^2]$. Therefore, by Talagrand's contraction lemma,
	\begin{align}
		T_1
		&\le
		2k\kappa^2
		\,
		\mathbb E_{\sigma}
		\left[
		\sup_{f,g}
		\frac1m
		\sum_{i=1}^m
		\sigma_i
		h(z_i)^\top f(z_i^-)
		\right].
	\end{align}
	Expanding the inner product gives
	\begin{align}
		T_1
		&\le
		2k\kappa^2
		\sum_{\ell=1}^k
		\mathbb E_{\sigma}
		\left[
		\sup_{h_\ell\in\mathcal H_{a,\ell},\;f_\ell\in\mathcal F_\ell}
		\frac1m
		\sum_{i=1}^m
		\sigma_i
		h_\ell(z_i)f_\ell(z_i^-)
		\right].
	\end{align}
	For two scalar function classes $\mathcal U$ and $\mathcal V$ uniformly
	bounded by $\kappa$, the following standard product-class bound holds:
	\begin{align}
		\mathbb E_{\sigma}
		\left[
		\sup_{u\in\mathcal U,\;v\in\mathcal V}
		\frac1m
		\sum_{i=1}^m
		\sigma_i u(x_i)v(y_i)
		\right]
		\le
		4\kappa
		\left[
		\hat{\mathfrak R}_{m}(\mathcal U)
		+
		\hat{\mathfrak R}_{m}(\mathcal V)
		\right].
		\label{eq:product-class-bound}
	\end{align}
	Indeed, using the identity
	\[
	uv
	=
	\frac12\left[(u+v)^2-u^2-v^2\right],
	\]
	or equivalently the polarization identity, and then applying Talagrand's
	contraction lemma to the square function on $[-2\kappa,2\kappa]$, one obtains
	\eqref{eq:product-class-bound}. Applying this bound with
	\[
	\mathcal U=\mathcal H_{a,\ell},
	\qquad
	\mathcal V=\mathcal F_\ell,
	\]
	we get
	\begin{align}\label{eq:T1-bound}
		T_1
		\le
		2k\kappa^2
		\sum_{\ell=1}^k
		4\kappa
		\left[
		\hat{\mathfrak R}_{m}(\mathcal H_{a,\ell})
		+
		\hat{\mathfrak R}_{m}(\mathcal F_\ell)
		\right]
		\le
		8k^2\kappa^3
		\left[
		\hat{\mathfrak R}_{m}(\mathcal H_a)
		+
		\hat{\mathfrak R}_{m}(\mathcal F)
		\right].
	\end{align}

	We next control $T_2$. Expanding the inner product gives
	\begin{align}
		T_2
		&=
		\mathbb E_{\sigma}
		\left[
		\sup_{f,g}
		\frac1m
		\sum_{i=1}^m
		\sigma_i
		\sum_{\ell=1}^k h_\ell(z_i)f_\ell(z_i^+)
		\right]
		\nonumber\\
		&\le
		\sum_{\ell=1}^k
		\mathbb E_{\sigma}
		\left[
		\sup_{h_\ell\in\mathcal H_{a,\ell},\;f_\ell\in\mathcal F_\ell}
		\frac1m
		\sum_{i=1}^m
		\sigma_i
		h_\ell(z_i)f_\ell(z_i^+)
		\right].
	\end{align}
	Again applying the product-class bound \eqref{eq:product-class-bound},
	\begin{align}
		T_2
		\le
		\sum_{\ell=1}^k
		4\kappa
		\left[
		\hat{\mathfrak R}_{m}(\mathcal H_{a,\ell})
		+
		\hat{\mathfrak R}_{m}(\mathcal F_\ell)
		\right]
		\le
		4k\kappa
		\left[
		\hat{\mathfrak R}_{m}(\mathcal H_a)
		+
		\hat{\mathfrak R}_{m}(\mathcal F)
		\right].
		\label{eq:T2-bound}
	\end{align}
	
	Combining \eqref{eq:T1-bound} and \eqref{eq:T2-bound}, we obtain
	\begin{align}
		\hat{\mathfrak R}_{S}(\mathcal L_a)
		&\le
		T_1+2T_2
		\nonumber\\
		&\le
		\left(8k^2\kappa^3+8k\kappa\right)
		\left[
		\hat{\mathfrak R}_{m}(\mathcal H_a)
		+
		\hat{\mathfrak R}_{m}(\mathcal F)
		\right]
		\nonumber\\
		&\le
		\left(8k^2\kappa^3+8k\kappa\right)
		\left[
		\hat{\mathfrak R}_{m}(\mathcal G\circ\mathcal F)
		+
		\hat{\mathfrak R}_{m}(\mathcal F)
		\right].
	\end{align}
	Taking $m=n/2$ gives the desired Rademacher complexity control for the
	tuple-loss class.
	
	Here $\hat{\mathfrak R}_{m}(\mathcal G\circ\mathcal F)$ controls the online
	branch $h=g\circ f\circ a$, while $\hat{\mathfrak R}_{m}(\mathcal F)$ controls
	the target branch $f$. Since $a$ is fixed, the composition with $a$ does not
	introduce an additional trainable function class and is absorbed into
	$\mathcal G\circ\mathcal F$.
	
	By the standard Rademacher uniform convergence bound, with probability at
	least $1-\delta$, uniformly over $f\in\mathcal F$ and $g\in\mathcal G$,
	\begin{align}
		&
		\left|
		\mathcal R_{\mathrm{S\text{-}JEPA}}(f,g,a)
		-
		\hat{\mathcal R}_{m}(f,g,a)
		\right|
		\nonumber\\
		&\le
		\left(16k^2\kappa^3+16k\kappa\right)
		\left[
		\hat{\mathfrak R}_{m}(\mathcal G\circ\mathcal F)
		+
		\hat{\mathfrak R}_{m}(\mathcal F)
		\right]
		+
		\left(4k\kappa^2+k^2\kappa^4\right)
		\left(
		\sqrt{\frac{\log(2/\delta)}{n}}
		+
		\delta
		\right).
	\end{align}
	Taking $m=n/2$, define
	\begin{align}
		\epsilon_n
		:=
		\left(16k^2\kappa^3+16k\kappa\right)
		\left[
		\hat{\mathfrak R}_{n/2}(\mathcal G\circ\mathcal F)
		+
		\hat{\mathfrak R}_{n/2}(\mathcal F)
		\right]
		+
		\left(4k\kappa^2+k^2\kappa^4\right)
		\left(
		\sqrt{\frac{\log(2/\delta)}{n}}
		+
		\delta
		\right).
	\end{align}
	Then, with probability at least $1-\delta$, uniformly over
	$f\in\mathcal F$ and $g\in\mathcal G$,
	\[
	\left|
	\mathcal R_{\mathrm{S\text{-}JEPA}}(f,g,a)
	-
	\hat{\mathcal R}_{n/2}(f,g,a)
	\right|
	\le
	\epsilon_n .
	\]
	
	Using this uniform deviation bound and the empirical optimality of
	$(\hat f,\hat g)$, we have
	\begin{align}
		&\mathcal R_{\mathrm{S\text{-}JEPA}}(\hat f,\hat g,a)
		-
		\mathcal R_{\mathrm{S\text{-}JEPA}}(f^*,g^*,a)
		\nonumber\\
		&\le
		\hat{\mathcal R}_{n/2}(\hat f,\hat g,a)
		+
		\epsilon_n
		-
		\mathcal R_{\mathrm{S\text{-}JEPA}}(f^*,g^*,a)
		\nonumber\\
		&\le
		\hat{\mathcal R}_{n/2}(f^*,g^*,a)
		+
		\epsilon_n
		-
		\mathcal R_{\mathrm{S\text{-}JEPA}}(f^*,g^*,a)
		\nonumber\\
		&\le
		2\epsilon_n .
	\end{align}
	Substituting the definition of $\epsilon_n$ gives
	\begin{align}
		&\mathcal R_{\mathrm{S\text{-}JEPA}}(\hat f,\hat g,a)
		-
		\mathcal R_{\mathrm{S\text{-}JEPA}}(f^*,g^*,a)
		\nonumber\\
		&\le
		\left(32k^2\kappa^3+32k\kappa\right)
		\left[
		\hat{\mathfrak R}_{n/2}(\mathcal G\circ\mathcal F)
		+
		\hat{\mathfrak R}_{n/2}(\mathcal F)
		\right]
		\nonumber\\
		&\quad+
		\left(8k\kappa^2+2k^2\kappa^4\right)
		\left(
		\sqrt{\frac{\log(2/\delta)}{n}}
		+
		\delta
		\right).
	\end{align}
	This proves the desired sample error bound.
\end{proof}

\end{document}